\documentclass[12pt]{article}
\addtolength{\oddsidemargin}{-.5in}%
\addtolength{\evensidemargin}{-1in}%
\addtolength{\textwidth}{1in}%
\addtolength{\textheight}{1.7in}%
\addtolength{\topmargin}{-1in}%
\usepackage[utf8]{inputenc}
\usepackage{amsmath}
\usepackage{amssymb}
\usepackage{booktabs}
\usepackage{caption}
\usepackage{multirow}
\usepackage{tabularx}
\usepackage{graphicx}
\usepackage{verbatim}
\usepackage{subcaption}
\usepackage{authblk}
\usepackage{rotating,amsfonts,amsthm}
\usepackage{mathrsfs,color,graphics}
\usepackage{epsfig,array,wrapfig,longtable}
\usepackage{hyperref}
\usepackage{natbib}
\usepackage[normalem]{ulem}
\usepackage{algorithm, algorithmic}
\def\boxit#1{\vbox{\hrule\hbox{\vrule\kern6pt \vbox{\kern6pt#1\kern5pt}
\kern6pt\vrule}\hrule}}

\newtheorem{theorem}{Theorem}[section]

\newtheorem{definition}{Definition}[section]
\newtheorem{proposition}{Proposition}[section]
\newtheorem{corollary}{Corollary}[section]

\newtheorem{remark}{Remark}[section]

\graphicspath{{pics/}{}}
\DeclareGraphicsExtensions{.jpg,.pdf,.mps,.png}

\def\RR{\mathbb R}

\newcommand{\bo}{\mathbf{1}}

\newcommand{\bP}{\mathbf{P}}
\newcommand{\bS}{\mathbf{S}}
\newcommand{\bx}{\mathbf{x}}
\newcommand{\bX}{\mathbf{X}}

\newcommand{\by}{\mathbf{y}}
\newcommand{\bu}{\mathbf{u}}

\newcommand{\bz}{\mathbf{z}}
\newcommand{\bbeta}{{\boldsymbol{\beta}}}
\newcommand{\bmu}{{\boldsymbol{\mu}}}
\newcommand{\bgamma}{{\boldsymbol{\gamma}}}

\newcommand{\bet}{{\boldsymbol{\eta}}}
\newcommand{\bPi}{{\boldsymbol{\Pi}}}
\newcommand{\bsigma}{{\boldsymbol{\sigma}}}
\newcommand{\bSigma}{{\boldsymbol{\Sigma}}}


\begin{document}
\def\spacingset#1{\renewcommand{\baselinestretch}%
{#1}\small\normalsize} \spacingset{1}
\title{A Novel Framework for Online Supervised Learning with Feature Selection}
\date{\vspace{-5ex}}


\author{Lizhe Sun, Mingyuan Wang, Siquan Zhu and Adrian Barbu\thanks{To whom correspondence should be addressed: Adrian Barbu. Barbu is Professor, Department of Statistics, Florida State University, Tallahassee, FL 32306. Sun is Postdoc, Beijing International Center for Mathematical Research, Peking University. Wang is Ph.D. graduate and Zhu is Ph.D. student, Department of Statistics, Florida State University.}}

\maketitle

\begin{abstract}
Current online learning methods suffer issues such as lower convergence rates and limited capability to select important features compared to their offline counterparts. 
In this paper, a novel framework for online learning based on running averages is proposed.
Many popular offline regularized methods such as Lasso, Elastic Net, Minimax Concave Penalty (MCP), and Feature Selection with Annealing (FSA) have their online versions introduced in this framework. 
The equivalence between the proposed online methods and their offline counterparts is proved, and then novel theoretical true support recovery and convergence guarantees are provided for some of the methods in this framework. 
Numerical experiments indicate that the proposed methods enjoy high true support recovery accuracy and a faster convergence rate compared with conventional online and offline algorithms.  
Finally, applications to large datasets are presented, where again the proposed framework shows competitive results compared to popular online and offline algorithms.

{\bf Keywords}: Variable Selection, Running Averages, Streaming Data, Big Data Learning, Model Adaptation
\end{abstract}

\newpage
\spacingset{1.8}
\section{Introduction}\label{sec1:introduction}

Online learning is one of the most promising research areas that can efficiently handle large-scale data analysis problems. 
In many scenarios, since observations are coming in real-time, it is not easy to collect all observations together and then learn a model.
Moreover, due to datasets from various areas such as bioinformatics, medical imaging, and computer vision rapidly increasing in size, we often encounter the problem that datasets are so large that they cannot fit in the computer memory.
Learning statistical models from such large-scale datasets needs a large amount of computational and stored resources.
The conventional online framework can address these issues by constructing and updating the model sequentially, using one example at a time, or a mini-batch of examples at a time. 
There are a sequence of observations $\bz_i = (\bx_i, y_i)$, $i = 1, 2,...$, that are coming, where $\bx_i \in \RR^p$ is a $p$-dimensional vector and $y_i \in \RR$. Consider the coming $n$-th observation $\bz_n$ and the current model coefficient vector $\bbeta_{n} \in \RR^p$, the updated $\bbeta_{n+1}$ can be learned sequentially by updating the gradient 
\begin{equation*}
\bbeta_{n+1} = \bbeta_{n} - \eta\frac{\partial f(\bbeta,\bz_n)}{\partial \bbeta},
\end{equation*}
where $f(\cdot; \bz_n):\RR^p \to \RR$ is a per-example loss function and $\eta$ is a learning rate.
The elements in the vector $\bbeta_1 = \mathbf{0}$ are the initialized coefficients. 
In the theoretical analysis of online learning, it is of interest to obtain an upper bound of the regret 
\begin{equation*}
R_n=\frac{1}{n}\sum_{i=1}^{n} f(\bbeta_i;\bz_i) - \min_{\bbeta}\frac{1}{n}\sum_{i=1}^{n}f(\bbeta;\bz_i).
\end{equation*} 
The regret can measure the difference of the loss compared to an offline optimization algorithm, and the speed of convergence of the online algorithms.
Under the assumptions that $f(\bbeta; \bz_i)$ is Lipschitz-continuous and convex w.r.t $\bbeta$, the regret enjoys the upper bound of $\mathcal{O}(1/ \sqrt{n})$ \citep{zinkevich2003online}. 
Moreover, if $f(\bbeta; \bz_i)$ is a strongly convex function, the regret has the logarithmic upper bound of $\mathcal{O}(\log(n)/n)$ \citep{hazan2007logarithmic}.

Several online methods are proposed to solve the variable selection problem in the conventional online scenario. 
For online convex optimization, there are two main lines of research. 
One is the Forward-Backward-Splitting method \citep{duchi2009efficient}, building a framework for online proximal gradient (OPG). 
The other one is Xiao's Regularized Dual Averaging method (RDA) \citep{xiao2010dual}, which extended the primal-dual sub-gradient method from \cite{nesterov2009primal} to the online case. 
Some online variants were developed in recent years, such as OPG-ADMM and RDA-ADMM \citep{suzuki2013dual}. Independently, \cite{ouyang2013stochastic} designed stochastic ADMM (SADMM) as well, the same algorithm as OPG-ADMM.
Additionally, for online non-convex optimization, some methods based on the $\ell_0$ penalty were proposed in the recent literature.
\cite{langford2009sparse} proposed a variant of the truncated stochastic gradient descent (TSGD) method.
Then \cite{fan2018statistical} provided a statistical analysis of the truncated stochastic gradient descent method.
Similar methods were also proposed in \cite{Wang2014OnlineFS, wu2017large, nguyen2017linear}.
Additionally, a Bayesian truncated stochastic gradient descent method was proposed in \cite{Cai2009OnlineFS}. 
In this paper, the methods in the proposed framework are compared with the SADMM method \citep{ouyang2013stochastic} and the TSGD method \citep{langford2009sparse} in the linear regression model.  

There is another research direction for online feature selection in the high-dimensional case. 
\cite{yang2016online} proposed a new framework for online learning in which features arrive one by one, instead of observations, and then we decide what features to retain. 
Unlike the conventional online learning case, the disadvantage of this new online scenario is that we cannot learn a model for prediction until all relevant features are disclosed. 
In this article, we assume that the observations arrive sequentially in time, therefore we do not cover algorithms such as \cite{yang2016online} for comparison.    

However, conventional online methods have some limitations. 
First, they cannot access the full gradient to update the parameter vector at each iteration. 
Online methods are sequential methods, using one observation or a mini-batch for acceleration \citep{cotter2011better} at each iteration. 
Therefore, online methods such as online gradient descent (OGD) suffer a lower convergence rate, $\mathcal{O}(1/\sqrt{n})$ for general convexity and $\mathcal{O}(\log(n)/n)$ for strongly convex functions \citep{shalev2014understanding}. 
In contrast, conventional offline methods enjoy the convergence rate of $\mathcal{O}(1/n)$ \citep{gyorfi2002distribution}. 
Second, they are not able to exploit the sparse structure of the coefficient vector, i.e., they cannot select variables.
Although the existing OPG and RDA methods can induce a sparse estimated coefficient vector, they cannot recover the support of true signals.  

\begin{table}[t] 
\begin{center}
\caption{Overview of the proposed and conventional online methods and their capabilities. }\label{table1:algorithms}
\scalebox{0.75}{
\begin{tabular}{c|ccccccc}
\hline
&Memory&\multicolumn{2}{c}{Computation time} &\multicolumn{2}{c}{Convergence rate} &Feature &True Support    \\
Methods   &                     &Running Averages                   &Algorithms                   &Methods                  &Regret                                 &Selection  &Recovery \\
\hline
OGD          &$\mathcal{O}(p)$       &-                               &$\mathcal{O}(np)$         &$\mathcal{O}(n^{-1/2})$ &Slow                                   &No                         &No \\
\hline
SADMM    &$\mathcal{O}(p)$       &-                                         &$\mathcal{O}(np)$          &$\mathcal{O}(n^{-1/2})$ &Slow                                  &Yes                         &No \\
\hline
TSGD         &$\mathcal{O}(p)$       &-                                         &$\mathcal{O}(np)$          &$\mathcal{O}(n^{-1/2})$ &Slow                                  &Yes                        &No \\
\hline
OFSA (ours)      &$\mathcal{O}(p^2)$   &$\mathcal{O}(np^2)$           &$\mathcal{O}(p^2)$        &$\mathcal{O}(n^{-1})$   &Fast                                   &Yes                        &Yes \\
\hline
OLS-th (ours)      &$\mathcal{O}(p^2)$   &$\mathcal{O}(np^2)$           &$\mathcal{O}(p^3)$      &$\mathcal{O}(n^{-1})$   &$\mathcal{O}(\log^2(n)/n)$                    &Yes                        &Yes  \\
\hline
OMCP (ours)      &$\mathcal{O}(p^2)$   &$\mathcal{O}(np^2)$            &$\mathcal{O}(p^2)$        &$\mathcal{O}(n^{-1})$       &Fast                                  &Yes                         &Yes \\
\hline
OElnet (ours)     &$\mathcal{O}(p^2)$   &$\mathcal{O}(np^2)$            &$\mathcal{O}(p^2)$        &$\mathcal{O}(n^{-1})$       &Fast                                   &Yes                         &Yes \\
\hline
\end{tabular}}
\end{center}
\end{table}

This article proposes a novel framework for online learning with feature selection using running averages (RAVEs).
The proposed framework uses the idea in the Recursive Least Squares \citep{kushner2003stochastic} method, which can update $\bX^T\bX$ incrementally to solve the least squares problem. However, it goes beyond the least squares in the sense that it shows how to use the RAVEs to adapt many feature selection algorithms such as Lasso and other penalized methods, as well as Feature Selection with Annealing to online learning. Moreover, it addresses under the same framework the problem of binary classification with imbalanced data.

\textcolor{black}{The proposed novel framework is designed to address the streaming data. In this case, the sample size $n$ may change from a very small value to approach infinity while the number of predictors $p$ remains at a smaller order than $n$. Therefore, the methods in the proposed novel framework may need to address both the low-dimensional and high-dimensional cases. In real-world data analysis, the proposed methods in this framework are well equipped to handle data with the number of predictors up to roughly 50k, and the sample size $n$ can be arbitrarily large. Although the sample size $n$ is so large, the online methods in this framework can learn a model with a constant memory requirement.
Additionally, the proposed methods in this framework can learn different models with different sparsity levels and even different types of penalties at any time.}
Table \ref{table1:algorithms} summarizes the convergence rate and the capability of variable selection for the proposed methods and the existing methods.

It is worth noting that several papers have discussed the topic of streaming data during the same period as this work and the RAVEs methodology was utilized in these publications \citep{Schifano2015OnlineBigData, Luo2019RenewableEA, luo2023multivariate}.   
These articles called running average statistics as cumulative summary statistics (CUSUM), which was originally introduced by \cite{Page1955ATF}. 
Furthermore, RAVEs have proven to be applicable in various topics, including statistical query model \citep{kearns1998efficient, chu2007map}, change point detection \citep{Yu2017FiniteSC}, and genetic correlation estimation \citep{Wang2021EstimationGC}. 
These examples show the usefulness of the RAVEs approach in diverse research areas.
However, RAVEs are only used for statistical inference in these papers.
They did not discuss the application of RAVEs to variable selection in the framework of online learning.
To our knowledge, this is the first work to use RAVEs for online variable selection and concept drifting data. \textcolor{black}{And the theoretical analysis for the proposed online methods is provided under the assumption of the linear regression model.} Compared to the existing online feature selection methods such as OPG, RDA, and TSGD, the proposed methods offer superior performance on variable selection and prediction for future data.

The remaining parts of the paper are organized as follows. 
Section \ref{sec2:setandnota} introduces the novel online learning framework.
Section \ref{sec3:method} proposes online versions of many popular variable selection methods in this framework.
Section \ref{sec4:theory} provides the theoretical guarantees for the proposed methods in this framework. 
Section \ref{sec5:simandrealdata} assesses the performance of the proposed methods via simulation studies and some real data analyses. 
Section \ref{sec6:discuss} concludes the paper with a brief discussion.

\section{Setup and Notation}\label{sec2:setandnota}
In this section, a novel framework using running averages (RAVEs) is developed.
First, we establish notation and problem settings. 
We denote vectors by lowercase bold letters, such as $\bx \in \mathbb{R}^d$, and scalars by lowercase letters, e.g. $x \in \mathbb{R}$. 
A sequence of vectors is denoted by subscripts, i.e., $\mathbf{w}_1, \mathbf{w}_2, \dots$, and the entries in a vector are denoted by non-bold subscripts, like $w_j$. 
We use uppercase bold letters to denote matrices, such as $\mathbf{M} \in \mathbb{R}^{d \times d}$, and uppercase letters for random variables, like $X$.
For a vector $\bgamma = (\gamma_1, \gamma_2, \cdots, \gamma_d)^T \in \mathbb{R}^d$, we define the vector norms:
$\|\bgamma\|_1 = \sum_{j=1}^{d}\mid\gamma_j\mid$ and $\|\bgamma\| = \sqrt{\sum_{j=1}^{d}\gamma_j^2}$.

\subsection{Running Averages}
Let $\{(\bx_i, y_i)\}_{i=1}^n$ be observations with $\bx_i = (x_{i1}, x_{i2}, \cdots, x_{ip})^T \in \mathbb{R}^p$ and $y_i \in \mathbb{R}$, $\bX = (\bx^T_1, \bx^T_2, \cdots, \bx^T_n)^T$ be data matrix, and $\mathbf{y} = (y_1, y_2, \cdots, y_n)^T$ be response.
Then the RAVEs are denoted as follows by
\begin{eqnarray*}
&&{\bmu}_{x} = \frac{1}{n}\sum_{i=1}^{n}\bx_i, \;\bS_{xx} = \frac{1}{n}\sum_{i=1}^{n} \bx_i\bx^T_i, \;
\bS_{xy} = \frac{1}{n}\sum_{i=1}^{n}y_i\bx_i, \\
&&{\mu}_{y} = \frac{1}{n}\sum_{i = 1}^{n}y_i, \; S_{yy} = \frac{1}{n}\sum_{i=1}^{n}y^2_i,
\end{eqnarray*}
and the cumulative sample size $n$. These RAVEs can be updated incrementally such as
\begin{equation}
\bmu_x^{(n+1)}=\frac{n}{n+1} \bmu_x^{(n)}+\frac{1}{n+1}\bx_{n+1}, \label{eq:raveupdate}
\end{equation}
which is the procedure from Chapter 11.2.3 of \cite{pml1Book}.
A diagram of the RAVEs framework is shown in Figure \ref{fig: diagram}.
\begin{figure}[ht]
\centering
\includegraphics[width=0.75\linewidth]{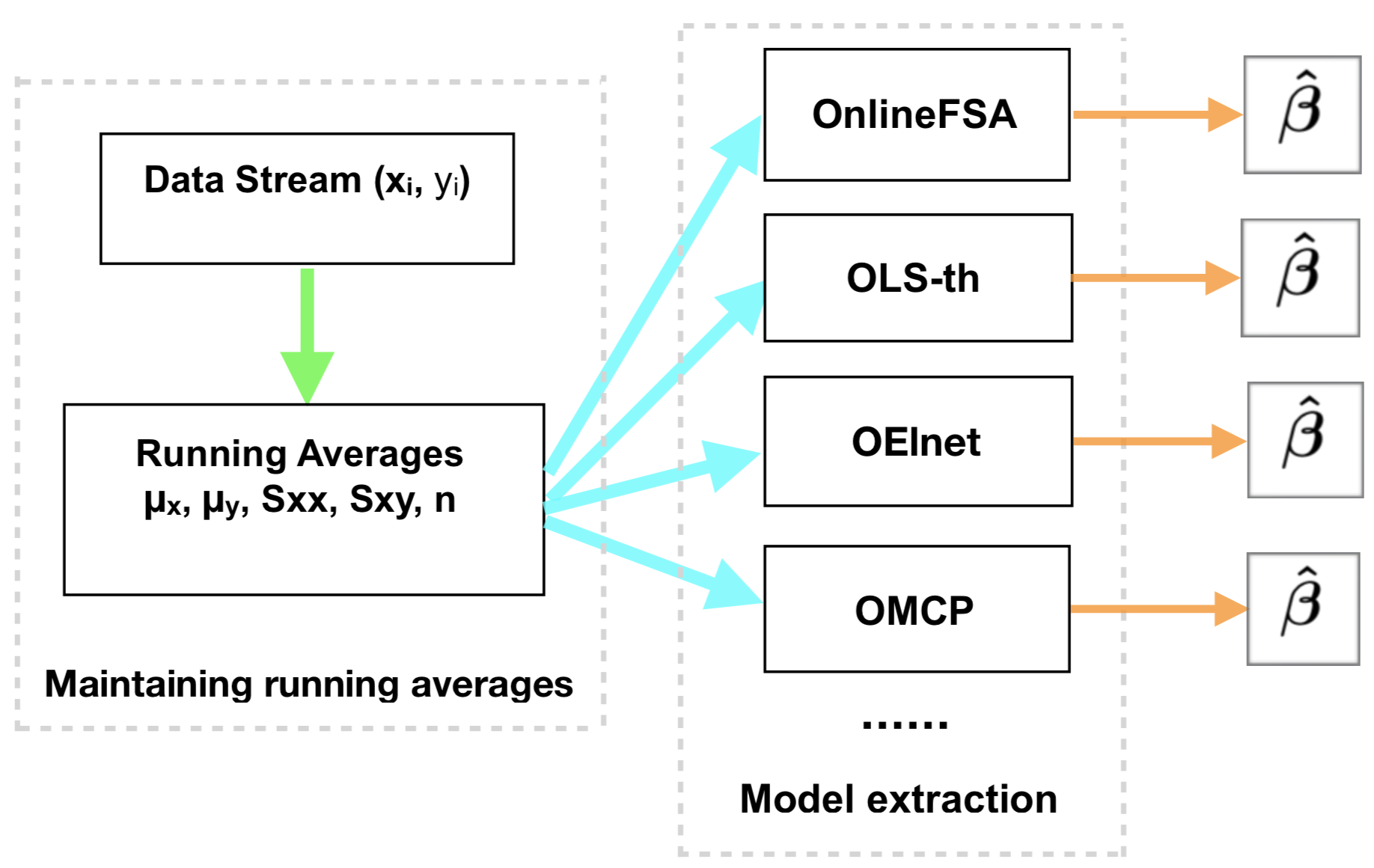}
\caption{Diagram of the running averages-based methods. The RAVEs are updated as the data is received. The model can be extracted from the running average statistics at any time.}\label{fig: diagram}
\end{figure}

Compared to the conventional model training methods, learning a statistical model by RAVEs has the following advantages.
First, the RAVEs such as $\bS_{xx}= \frac{1}{n}\sum_{i=1}^{n} \bx_i\bx^T_i$ and $\bS_{xy} = \frac{1}{n}\sum_{i=1}^{n}y_i\bx_i$ can be computed on several machines, assigning a batch of observations to one machine.
Therefore, the partial summary statistics for each batch can be computed in parallel, and then RAVEs can be reconstructed from these partial summary statistics. 
Moreover, the RAVEs contain all the necessary sample information for learning certain models.
Additionally, the dimension of the running average statistics will not change with sample size $n$.
The RAVEs can be used in the scenario of online learning to address the streaming data since they are updated one example at a time.

\subsection{Data Standardization}\label{subsec2:Standardization}
Data standardization is an important step in real data analysis, especially for feature selection, because a feature could have an arbitrary scale (unit of measure) and its scale should not influence its importance in the model. 
For this purpose,  the data matrix $\bX$ and the response vector $\by$ are usually standardized by removing the mean, and $\bX$ is further standardized by bringing all columns to the same scale. 
However, we need to standardize the running average statistics because the data is discarded in the online setup and only the RAVEs are memorized. 
Separate steps will be shown for subtracting a mean $\bmu_x$ and for dividing by a standard deviation vector $\bsigma$.

\noindent {\bf Subtracting a mean} $\bmu_x$.
Denoting by $\mathbf{1}_n = [1,1,\cdots,1]^T \in \mathbb{R}^n$, the standardized version of the data matrix $\bX$ and the response $\by$ are: 
\begin{equation*}
\mathbf{\bar X} = \bX - \mathbf{1}_n\bmu_x^T, \ \mathbf{\bar y} = \by - \mu_y\mathbf{1}_n.
\end{equation*}
Then the mean of RAVEs can be removed by following steps:
\begin{equation*}
\begin{split}
    \bS_{\bar x \bar y} &= \frac{1}{n}(\bX - \mathbf{1}_n\bmu_x^T)^T(\by - \mu_y\mathbf{1}_n) \\
    &= \frac{1}{n}\bX^T\by - 2\mu_y\bmu_x + \mu_y\bmu_x = \bS_{xy} - \mu_y\bmu_x, \\
    \bS_{\bar x\bar x}&=\frac{1}{n}(\bX - \mathbf{1}_n\bmu_x^T)^T(\bX - \mathbf{1}_n\bmu_x^T) \\
    &= \frac{1}{n}\bX^T\bX - \bmu_x \bmu^T_x - \bmu_x \bmu_x^T+ \bmu_x \bmu_x^T = \bS_{xx} - \bmu_x \bmu^T_x.
\end{split}
\end{equation*}
\noindent {\bf Dividing by} $\bsigma=(\sigma_1,...,\sigma_p)^T$.
Let $\bPi=\text{diag}(\sigma_1,...,\sigma_p)^{-1}$ be the $p \times p$ diagonal matrix containing the inverse of the $\sigma_j$ on the diagonal. 
The standardized dataset is denoted as
$\mathbf{\tilde X}=\bX\boldsymbol{\Pi}$ and the RAVEs of the standardized dataset are:
\begin{gather*}
\bS_{\tilde x y}=
\frac{1}{n}\mathbf{{\tilde X}}^T\mathbf{ y} = 
\frac{1}{n}\boldsymbol{\Pi}\mathbf{X}^T\mathbf{y}
=\boldsymbol{\Pi}\bS_{xy}, \\
\bS_{\tilde x\tilde x}=\frac{1}{n}\mathbf{{\tilde X}}^T\mathbf{{\tilde X}}
=\boldsymbol{\Pi}\frac{\bX^T\bX}{n}\boldsymbol{\Pi} 
= \boldsymbol{\Pi}\bS_{xx} \boldsymbol{\Pi}.
\end{gather*}
The sample standard deviation for the random variable $X_j$ can be estimated by RAVEs as
\begin{equation*}
\sigma_{x_j} = \sqrt{(\bS_{xx})_j - (\bmu_{x})_j^2},
\end{equation*}
where $(\bS_{xx})_j$ is the $j$-th diagonal entry of $\bS_{xx}$. For convenience, in the rest of this article, we still use $\bS_{xx}$ and $\bS_{xy}$ to represent the RAVEs after the standardization procedures.

\section{Methods}\label{sec3:method}
Several running averages-based online learning methods are proposed in this section.
These proposed online methods can address the low-dimensional and high-dimensional settings.
The applications of the proposed methods in the low-dimensional setting are mainly discussed, even though some of them can also address the high-dimensional cases. 
The first is the online version of the ordinary least squares method.
Then, the online least squares with thresholding method is proposed, which can select the important features.
Finally, online versions of many feature selection methods, including Feature Selection with Annealing (FSA), Lasso, Elastic Net, and Minimax Concave Penalty (MCP) are introduced.  
To simplify the notations, denote online least squares by OLS, OLSth for OLS with thresholding, OFSA for online FSA,  OLasso for online Lasso, OElnet for online Elastic Net, and OMCP for online MCP. 

\subsection{Preliminaries}
First, we prove that these online methods are equivalent to their offline counterparts. 
In fact, in the RAVEs framework, the proposed methods share the same objective loss function as in offline learning, which is the key point to prove their equivalence.
\begin{proposition}\label{prop1}
Consider the general penalized regression problem
\begin{equation}\label{eq:penreg}
\textcolor{black}{\min_{\bbeta \in \mathbb{R}^p}}\frac{1}{2n}\|\mathbf{y} - \bX\bbeta\|^2 + \bP(\bbeta; \lambda),
\end{equation}
in which $\bbeta \in \mathbb{R}^p$ is the coefficient vector and $\bP({\bbeta}; \lambda) = \sum_{j=1}^{p}\bP(\beta_j ; \lambda)$ is a penalty function. 
It is equivalent to the online optimization problem based on RAVEs by 
\begin{equation}\label{eq:rsreg}
\textcolor{black}{\min_{\bbeta \in \mathbb{R}^p}} \frac{1}{2}\bbeta^T\bS_{xx}\bbeta - \bbeta^T\bS_{xy} + \bP(\bbeta; \lambda).
\end{equation}
\end{proposition}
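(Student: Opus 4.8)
The plan is to show that the two objective functions differ only by an additive constant that does not depend on $\bbeta$, so that they share the same minimizer. Since the penalty term $\bP(\bbeta;\lambda)$ is identical in both \eqref{eq:penreg} and \eqref{eq:rsreg}, the whole argument reduces to rewriting the least-squares data-fitting term $\frac{1}{2n}\|\mathbf{y}-\bX\bbeta\|^2$ in terms of the running averages.

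First I would expand the squared norm as
\[
\frac{1}{2n}\|\mathbf{y}-\bX\bbeta\|^2 = \frac{1}{2n}\mathbf{y}^T\mathbf{y} - \frac{1}{n}\bbeta^T\bX^T\mathbf{y} + \frac{1}{2n}\bbeta^T\bX^T\bX\bbeta .
\]
Then I would substitute the definitions of the running averages, namely $\bS_{xx}=\frac{1}{n}\bX^T\bX$, $\bS_{xy}=\frac{1}{n}\bX^T\mathbf{y}$, and $S_{yy}=\frac{1}{n}\mathbf{y}^T\mathbf{y}$, where $\bX,\mathbf{y}$ denote the standardized data so that $\bS_{xx},\bS_{xy}$ are the post-standardization running averages per the convention fixed just above the statement. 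This turns the data-fitting term into
\[
\frac{1}{2}\bbeta^T\bS_{xx}\bbeta - \bbeta^T\bS_{xy} + \frac{1}{2}S_{yy} .
\]

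Adding back the penalty, the offline objective in \eqref{eq:penreg} equals the running-averages objective in \eqref{eq:rsreg} plus the constant $\frac{1}{2}S_{yy}$. Because this constant is independent of $\bbeta$, the two problems have identical minimizers (their optimal values differing by exactly $\frac{1}{2}S_{yy}$), which is the claimed equivalence.

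There is no genuine obstacle here: the result is a direct algebraic identity and the only work is bookkeeping. The two points needing minor care are, first, confirming that the $\bX,\mathbf{y}$ in \eqref{eq:penreg} are understood as the already-standardized quantities, so that the factors of $\boldsymbol{\Pi}$ and the mean subtraction are absorbed consistently into $\bS_{xx}$ and $\bS_{xy}$; and second, observing that discarding the constant $\frac{1}{2}S_{yy}$ is legitimate precisely because $\argmin$ is invariant under additive constants. I would state the equivalence as equality of minimizers rather than equality of objective values, since the latter holds only up to the fixed constant $\frac{1}{2}S_{yy}$.
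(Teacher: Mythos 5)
Your proposal is correct and follows essentially the same route as the paper's proof: expand the squared norm, substitute $\bS_{xx}=\bX^T\bX/n$, $\bS_{xy}=\bX^T\mathbf{y}/n$, $S_{yy}=\mathbf{y}^T\mathbf{y}/n$, and observe that the objectives differ only by the $\bbeta$-independent constant $\tfrac{1}{2}S_{yy}$. Your version is in fact slightly more careful than the paper's, which leaves the ``equivalence up to an additive constant'' point implicit.
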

\vspace{-3mm}
\begin{proof}
The loss function (\ref{eq:penreg}) can be rewritten as
\begin{equation*}
\begin{split}
     \frac{1}{2n}\|\mathbf{y}  - \bX\bbeta\|^2 + \bP(\bbeta; \lambda) =& \frac{1}{2n}(\mathbf{y} - \bX\bbeta)^T(\mathbf{y} - \bX\bbeta) + \bP(\bbeta; \lambda) \\
    =& \frac{\mathbf{y}^T\mathbf{y}}{2n} - \frac{\bbeta^T\bX^T\mathbf{y}}{n} + \bbeta^T\frac{\bX^T\bX}{2n}\bbeta+ \sum_{j=1}^{p}\bP(\beta_j; \lambda),
\end{split}
\end{equation*}
where $S_{yy}=\mathbf{y}^T\mathbf{y}/n$, $\bS_{xy}=\bX^T\mathbf{y}/n$, and $\bS_{xx}=\bX^T\bX/n$ are running averages. Therefore, the offline learning optimization problem (\ref{eq:penreg}) is equivalent to the running averages-based optimization problem (\ref{eq:rsreg}).  
\end{proof}

\subsection{Online Least Squares}
In the ordinary least squares method, the coefficient vector $\bbeta$ is estimated by solving $\bX^T\bX\bbeta = \bX^T\mathbf{y}$. 
Since $\bX^T\bX$ and $\bX^T\mathbf{y}$ can be computed by using RAVEs, we can obtain
\begin{equation*}
\bS_{xx} \bbeta= \bS_{xy} .
\end{equation*}
Thus, the online least squares method is equivalent to offline least squares.
\subsection{Online Least Squares with Thresholding}
OLS with thresholding (OLSth) is a simple method aimed at solving the following constrained minimization problem
\begin{equation*}
\textcolor{black}{\min_{\bbeta \in \mathbb{R}^p, \|\bbeta\|_0 \leq k^*}} \dfrac{1}{2n} \|\mathbf{y} - \bX\bbeta\|^2,
\end{equation*}
which is a non-convex and NP-hard problem because of the sparsity constraint. 
A three-step procedure is proposed to find an approximate solution: 
first, the online least squares method is used to estimate $\hat\bbeta$.
Then the important variables are selected according to the coefficient magnitudes $|\beta_j|$, $j = 1, 2, \cdots, p$. 
Finally, the least squares method is used to refit the model on the subset of selected features. The prototype method is described in Algorithm \ref{alg:olsth}.
If the number of true variable $k^*$ is unknown, the important covariates can be selected by the hypothesis test $H_0: \beta_j = 0$ for each $j = 1,2,\cdots, p$, where a $t$-statistic or $z$-statistic may be used.

\begin{algorithm}
\caption{\bf \small{Online Least Squares with Thresholding (OLSth)}}\label{alg:olsth}
\begin{algorithmic}[]
\STATE {\bfseries Input:} RAVEs $\bS_{xx}, \bS_{xy}$, sample size $n$, and true sparsity level $k^*$.
\STATE {\bfseries Output:} Coefficient vector $\bbeta$ with $\|\bbeta\|_0\leq k^*$.
\begin{enumerate}
    \item Find $\hat{\bbeta}$ by OLS.
    \item Keep only $k^*$ variables with largest $\lvert\hat{\beta}_j\rvert$.
    \item Fit the model on the selected features by OLS.
\end{enumerate}
\end{algorithmic}
\end{algorithm}
\begin{remark}\label{remark1}
In the high dimensional scenario ($p > n$), the estimator $\hat{\bbeta}$ in step 1 of the Algorithm \ref{alg:olsth} can come from online ridge regression rather than the OLS estimator.
If the true number $k^*$ is unknown, the multiple subset regression based on RAVEs can be run for high-dimensional statistical inference \citep{Liang2022MNR}, then the important features can be selected by using the $p$-values from the multiple hypothesis tests.
\end{remark}

\subsection{Online Feature Selection with Annealing}
Online Feature Selection with Annealing (OFSA) is an iterative thresholding method. 
The OFSA method can simultaneously solve the coefficient estimation problem and the feature selection problem. 
The main procedures in OFSA are: 1) uses an annealing plan to lessen the greediness in reducing the dimensionality from $p$ to $k^*$, 
2) removes irrelevant variables to facilitate computation.
The algorithm starts with an initialized parameter $\bbeta$, generally $\bbeta = 0$, and then alternates two basic steps: one is updating the parameters to minimize the loss $L(\bbeta)$ by gradient descent
$$
\bbeta = \bbeta - \eta\frac{\partial L}{\partial \bbeta},
$$
and the other one is a feature selection step that removes some variables based on the ranking of $\lvert\beta_j\rvert$, $j = 1, 2, \cdots, p$.
In the second step, an annealing schedule is used to decide the number of features $M_t$ kept at step $t$
$$
M_t = k^* + (p - k^*)\max\{0, \frac{T - t}{t\mu + T}\}, t = 1, 2, \cdots, T,
$$
where $T$ is the total iteration times.
Observe that these iteration steps $t = 1, 2,\cdots$ have nothing to do with the observations $n$.
More details are shown in \cite{barbu2017feature} about the offline FSA algorithm, such as applications and theoretical analysis. 
For the square loss, the computation of the gradient 
\begin{equation*}
\frac{\partial L}{\partial \bbeta} = - \frac{\bX^T\mathbf{y}}{n} + \frac{\bX^T\bX\bbeta}{n}=\bS_{xx}\bbeta - \bS_{xy}
\end{equation*}
falls into the proposed running averages framework. 
It is not hard to verify that the OFSA is equivalent to the offline FSA.
The OFSA algorithm is presented in Algorithm \ref{alg:fsa}.
\begin{algorithm}[ht]
\caption{{\bf \small{Online FSA}}}\label{alg:fsa}
\begin{algorithmic}
\STATE {\bfseries Input:} RAVEs $\bS_{xx}, \bS_{xy}$, sample size $n$, true sparsity level $k^*$, and learning rate $\eta$.
\STATE {\bfseries Output:} Estimated coefficient vector $\bbeta$ with $\|\bbeta\|_0\leq k^*$.
\STATE Initialize $\bbeta = 0$.
\begin{itemize}
    \item[] For $t = 1$ to $T$:
    \begin{enumerate}
    \item Update $\bbeta \leftarrow \bbeta - \eta (\bS_{xx}\bbeta-\bS_{xy})$.
    \item Keep only $M_t$ variables with highest $\lvert\beta_j\rvert$.
    \item Renumber the $M_t$ features as $1,...,M_t$.
    \end{enumerate}
    \item[] End For
\end{itemize}
\STATE Fit the model on the selected features by OLS.
\end{algorithmic}
\end{algorithm}
\subsection{Online Regularization Methods}
Penalized methods are proposed to select features, and can be mapped into the RAVEs framework.
They can estimate the parameter $\bbeta$ by solving the optimization problem (\ref{eq:penreg}), where $\lambda > 0$ is a hyper-parameter. The most popular method is the Lasso \citep{tibshirani1996regression}.
Moreover, the SCAD \citep{fan2001variable}, Elastic Net \citep{zou2005regularization}, and MCP \citep{zhang2010nearly} methods were proposed to deal with the variable selection and estimation problems. 
In this article, the gradient-based method with a thresholding operator $\mathbf{\Theta}(t;\lambda)$ is used to solve the regularized loss minimization problems \citep{she2009thresholding}. 
The general algorithm is presented in Algorithm \ref{alg:orm}. 
\begin{algorithm}[ht]
\caption{{\bf \small{Online Regularized Methods}}}\label{alg:orm}
\begin{algorithmic}
\STATE {\bfseries Input:} RAVEs $\bS_{xx}, \bS_{xy}$, sample size $n$, tuning parameter $\lambda$, and learning rate $\eta$.
\STATE {\bfseries Output:} Estimated coefficient vector $\bbeta$.
\STATE Initialize $\bbeta = 0$.
\begin{itemize}
    \item[] For $t = 1$ to $T$:
    \begin{enumerate}
    \item Update $\bbeta \leftarrow \bbeta - \eta (\bS_{xx}\bbeta-\bS_{xy})$.
    \item Update $\bbeta \leftarrow \mathbf{\Theta}(\bbeta; \eta\lambda)$.
    \end{enumerate}
    \item[] End For
\end{itemize}
\STATE Fit the model on the selected features by OLS.
\end{algorithmic}
\end{algorithm}
\subsection{Online Classification Methods}
The proposed online learning methods not only select important features for the linear regression model but can also select variables for the binary classification model, even though these methods are based on the linear regression model with $\ell_2$ loss such as (\ref{eq:penreg}). 
In fact, for the binary classification problem with labels $+1$ and $-1$, the coefficient vector estimated from least squares is proportional to the coefficient vector by linear discriminant analysis without intercept \citep{friedman2001elements}.
This implies that the proposed online learning methods can effectively select variables for binary classification as well.
As presented in \cite{neykov2016l1}, under certain assumptions, one can use the penalized methods to select features for classification models such as the Logistic regression model.

As a special case of binary classification, the problem of class label imbalance is challenging in the area of machine learning, as methods tend to favor the majority class, which leads to biased performance. 
In computer vision, it means that the algorithm might struggle to accurately classify the minority class (positive samples) due to the lack of representative examples.
The RAVEs framework can handle this imbalanced data classification problem by computing separate RAVEs $\bS_{xx}^p, \bmu_{x}^p, n_p$ for the observations with positive labels, and the observations with negative labels, $\bS_{xx}^n, \bmu_{x}^n, n_n$. 
First, these RAVEs must be normalized as described in Section \ref{subsec2:Standardization} using the same mean and standard deviation. 
Therefore, the mean $\bmu_x^n$ and standard deviation $\bsigma_n = \sqrt{\text{diag}{\bS_{xx}^n} - (\bmu_{x}^n)^2}$ of the observation with negative labels can be used to standardize both RAVEs since they are usually the majority of the data.
In this case, a weighted $\ell_2$ loss function is appropriate:
\begin{equation*}
\begin{split}
L(\bbeta,\beta_0) 
=&\frac{w_p}{n_p}\sum_{i,y_i=1} (y_i-\bx_i^T\bbeta-\beta_0)^2 
+\frac{w_n}{n_n}\sum_{i,y_i=-1} (y_i-\bx_i^T\bbeta-\beta_0)^2 \\
=&\frac{w_p}{n_p} \|\bX_p\bbeta+\beta_0\bo_{n_p}-\by_p\|^2
+\frac{w_n}{n_n}\|\bX_n\bbeta+\beta_0\bo_{n_n}-\by_n\|^2,
\end{split}  
\end{equation*}
where each positive has weight $w_p/n_p$ and each negative has weight $w_n/n_n$.
In practice, we use $w_p=w_n=1$ as default values.
Observe that an intercept $\beta_0$ is used in this case to avoid standardizing the binary labels $\by$.
The gradient of the weighted loss will be
\begin{equation}\label{imbal:grad1}
\begin{split}
\frac{\partial L}{\partial \bbeta}(\bbeta,\beta_0)=&w_p (\frac{\bX_p^T\bX_p\bbeta}{n_p}+\frac{\bX_p^T\bo_{n_p}}{n_p}\beta_0- \frac{\bX_p^T\mathbf{y}_p}{n_p})
+w_n (\frac{\bX_n^T\bX_n\bbeta}{n_n}+\frac{\bX_n^T\bo_{n_n}}{n_n}\beta_0-\frac{\bX_p^T\mathbf{y}_n}{n_n}) \\
=&w_p (\bS_{xx}^p\bbeta+\bmu_{x}^p(\beta_0-1))+w_n (\bS_{xx}^n\bbeta+\bmu_{x}^n(\beta_0+1))\\
=&(w_p\bS_{xx}^p+w_n\bS_{xx}^n)\bbeta+(w_p\bmu_x^p+w_n\bmu_x^n)\beta_0+w_n\bmu_x^n-w_p\bmu_x^p, 
\end{split}   
\end{equation}
\begin{equation}\label{imbal:grad2}
\begin{split}
\frac{\partial L}{\partial \beta_0}(\bbeta,\beta_0) =&w_p (\frac{\bo_{n_p}^T\bo_{n_p}\beta_0}{n_p}+\frac{\bo_{n_p}^T\bX_p\bbeta}{n_p}- \frac{\bo_{n_p}^T\mathbf{y}_p}{n_p})
+w_n (\frac{\bo_{n_n}^T\bo_{n_n}\beta_0}{n_n}+\frac{\bo_{n_n}^T\bX_n\bbeta}{n_n}- \frac{\bo_{n_n}^T\mathbf{y}_n}{n_n})\\
=&w_p (\bmu_{x}^p\bbeta+\beta_0-1)+w_n (\bmu_{x}^n\bbeta+\beta_0+1)\\
=&(w_p\bmu_{x}^p+w_n\bmu_{x}^n)\bbeta+(w_p+w_n)\beta_0+w_n-w_p.
\end{split}   
\end{equation}
The proposed online methods in Algorithms \ref{alg:olsth} to \ref{alg:orm} can select the important variables for the imbalanced data classification by using the gradients (\ref{imbal:grad1}) and (\ref{imbal:grad2}).  
\subsection{\bf Memory and Computational Complexity}
In general, the memory complexity for the RAVEs is $\mathcal{O}(p^2)$ because $\bS_{xx}$ is a $p \times p$ matrix. 
The computational complexity of maintaining the RAVEs is $\mathcal{O}(np^2)$.
Except for OLSth, the computational complexity for obtaining the model using the running average-based algorithms is $\mathcal{O}(p^2)$ based on the limited number of iterations, each taking $\mathcal{O}(p^2)$ time.
As for OLSth, it is $\mathcal{O}(p^3)$ if done by Gaussian elimination or $\mathcal{O}(p^2)$ if done using an iterative method that takes much fewer iterations than $p$. 
We can conclude that the storage of RAVEs does not depend on the sample size $n$, and the computation is linear in $n$.
Hence, when $n >> p$, compared to the batch learning algorithms, the running averages-based methods need less memory and have less computational complexity. 
At the same time, they can achieve the same convergence rate as the batch learning algorithms.

\subsection{\bf Model Adaptation}
Detecting changes in the underlying model and rapidly adapting to the changes are common problems in online learning, and some applications are based on varying-coefficient models \citep{javanmard2017perishability}. 
Our RAVEs online methods can adapt to coefficient changes for large-scale streaming data. For that, the update equation \eqref{eq:raveupdate} can be regarded in a more general form as
\begin{equation}\label{adapt:eq}
\bmu_x^{(n+1)}=(1 - \alpha_n)\bmu_x^{(n)}+\alpha_n\bx_{n+1},
\end{equation}
where we only show one of the RAVEs for illustration but the same type of updates are used for all of them.

The original RAVEs use $\alpha_n=1/(n+1)$, which gives all observations equal weight in the RAVEs. 
For the coefficient-varying models, we use a larger value of $\alpha_n$, which gives more weight to the recent observations. 
However, too much adaptation is also not desirable because in that case, the model will not be able to recover some weak coefficients that can only be recovered given sufficiently many observations.
More details about simulations and applications will be covered in Section \ref{sec5:simandrealdata}.

\section{Theoretical Analysis}\label{sec4:theory}
The theoretical justification for the proposed methods under the case of the linear regression model is provided in this section.  
First, by using the Proposition \ref{prop1}, we have the equivalence of the online penalized methods including Lasso, Elastic Net, and MCP with their offline counterparts, and thus all their theoretical guarantees of convergence, consistency, oracle inequalities, etc., can carry over to their online counterparts \citep{wainwright2009sharp, loh2017support}.
Then, theorems for variable selection consistency are provided for OLSth and OFSA methods in the low-dimensional case.
Finally, an upper bound of the regret for the OLSth method is provided by using the technique in \cite{yuan2014gradient}. 
Additionally, since data normalization is considered in our theoretical analysis, the intercept $\beta_0$ is not considered.  
All the proofs are provided in the supplementary material which is available online.

\begin{theorem}\label{them1}{\bf (True Support Recovery for OLSth)}
\textcolor{black}{Suppose that we have a sparse linear regression model
\begin{equation*}
y_i = \bx^T_i\bbeta^* + \epsilon_i,    
\end{equation*}
where $\bx_i \in \mathbb{R}^p$, $i = 1,2,\cdots$, is a random vector independently drawn from the multi-variate Gaussian distribution $N(0, \bSigma)$, and the random noise $\epsilon_i$ is sub-Gaussian with the parameter $\sigma$.
Let $S_{\bbeta^*}=\{j, \beta_j^*\not =0\}$ and $|S_{\bbeta^*}|=k^*$.
Assume that $p/n \to 0$ when $n \to +\infty$.}

\textcolor{black}{a) If
\begin{equation*}
    \min_{j \in S_{\bbeta^*}}|\beta_{j}^*|> 4\sigma\sqrt{\frac{\log(p)}{n\lambda}},
    \text{ where } 0 < \lambda < \lambda_{\min} (\bSigma), 
\end{equation*}
then with probability at least $1 - \exp(-p)-2\exp(-\log(p))$, the index set of the top $k^*$ values of the vector $|{\hat\bbeta}|$ is exactly $S_{\bbeta^*}$, where $\hat \bbeta$ is the OLS estimator.}

\textcolor{black}{b) If
\begin{equation*}
    \min_{j \in S_{\bbeta^*}}|\beta_{j}^*|> 4\sigma\sqrt{\frac{p}{n\lambda}},
    \text{ where } 0 < \lambda < \lambda_{\min} (\bSigma), 
\end{equation*}
then with probability at least $1 - \exp(-p)-2p\exp(-2p)$, the index set of the top $k^*$ values of the vector $|{\hat\bbeta}|$ is exactly $S_{\bbeta^*}$, where $\hat \bbeta$ is the OLS estimator.}
\end{theorem}

\textcolor{black}{
Theorem \ref{them1} shows a theoretical guarantee of true support recovery for OLSth.
We can conclude that the probability of true feature recovery does not depend on the true sparsity level $k^*$.
If the number of predictors $p$ increases, then with a very high probability, the support of true features can be recovered. If the number of predictors $p$ is fixed or increasing slowly and the sample size $n$ is increasing dramatically, the proposed OLSth method can recover very weak signals.
It is worth noting that we can relax the assumption that when $n \to +\infty$, then $p/n \to 0$ to the $p = \mathcal{O}(\exp(n^{\alpha}))$ and $n > p$, where $\alpha \in \mathbb{R}$ is a small value such as $\alpha \in (0, 1/10)$. Then, suppose that the minimum $\bbeta$ condition
\begin{equation*}
    \min_{j \in S_{\bbeta^*}}|\beta_{j}^*|> 4\sigma\sqrt{\frac{\log(p)}{n^{\alpha}\lambda}},
    \text{ where } 0 < \lambda < \lambda_{\min} (\bSigma) 
\end{equation*}
holds, with the probability at least $1 - \exp(-p) - 2p^{1 - n^{1-\alpha}}$, the index set $S_{\bbeta^*} = S_{\hat\bbeta}$.} 
Moreover, the theoretical justification for the data standardization case is presented.
\begin{remark}\label{remark2}
Denote $\bPi = \text{diag}\{\sigma_{\bx_1}, \sigma_{\bx_2}, \cdots,  \sigma_{\bx_p}\}^{-1}$, and $\hat{\bPi} = \text{diag}\{\hat{\sigma}_{\bx_1}, \hat{\sigma}_{\bx_2}, \cdots,  \hat{\sigma}_{\bx_p}\}^{-1}$. 
Given the conditions $\min_{j \in S_{\bbeta^*}}|\sigma_{\bx_j}\beta_{j}^*| > 4\sigma\sqrt{\frac{\log(p)}{\lambda n}}$, for $\lambda$ satisfying $0 < \lambda \leq \lambda_{\min} (\bSigma)$, then with high probability the index set of the top $k^*$ values of $|{\hat \beta_j}|$ is exactly $S_{\bbeta^*}$, where $\hat\bbeta=(\mathbf{{\tilde X}}^T\mathbf{{\tilde X}})^{-1}\mathbf{{\tilde X}}^T\mathbf{y}$ is the OLS estimate with standardized data matrix $\mathbf{{\tilde X}} = \bX\hat{\bPi}$. 
\end{remark}


Then, we will consider the theoretical guarantees of true support recovery for the OFSA method.
First, the definitions for restricted strong convexity$\backslash$smoothness are introduced.

\begin{definition}{\bf (Restricted Strong Convexity$\backslash$Smoothness)}
For any integer $s > 0$, a differentiable function $f(x)$ is called restricted strongly convex (RSC) with parameter $m_s$ and restricted strongly smooth (RSS) with parameter $M_s$ if there exist $m_s, M_s >0$ such that
\begin{equation*}
\begin{split}
\frac{m_s}{2}\|\bbeta - \bbeta'\|^2 &\leq f(\bbeta) - f(\bbeta') - \langle \nabla f(\bbeta'), \bbeta - \bbeta' \rangle \\ 
&\leq \frac{M_s}{2}\|\bbeta - \bbeta'\|^2,
\ \forall\|\bbeta - \bbeta'\|_0 \leq s. \\ 
\end{split}
\end{equation*} 
\end{definition}
It is worth noting that in the low-dimensional linear regression, the RSC$\backslash$RSS conditions hold with at least a probability $1 - 2\exp(-p)$ if there exist constants $m_s$ and $M_s$ such that
$$
0 < m_s < \lambda_{\min}(\bSigma) < \lambda_{\max}(\bSigma) < M_s < +\infty.
$$
Therefore, the following condition 
\begin{equation*}
m_s\|\bbeta - \bbeta'\|^2 \leq \frac{1}{n}\|\bX(\bbeta - \bbeta')\|^2 \leq M_s\|\bbeta - \bbeta'\|^2, \forall\|\bbeta - \bbeta'\|_0 \leq s,
\end{equation*}
holds, which is the restricted isometric property (RIP) condition in linear regression. 
\begin{proposition}\label{prop2}
Consider a sparse linear regression model
$$
y_i = \bx^T_i\bbeta^* + \epsilon_i,
$$
where $\bx_i \in \mathbb{R}^p$, $i = 1,2,\cdots$, is a random vector independently drawn from the multi-variate Gaussian distribution $N(0, \bSigma)$, and the random noise $\epsilon_i$ is sub-Gaussian with the parameter $\sigma$. Let $0 < m_s < \lambda_{\min}(\bSigma) < \lambda_{\max}(\bSigma) < M_s < +\infty$.
Let $\bbeta^*$ be a $k^*$-sparse vector and $S_{\bbeta^*}=\{j, \beta_j^*\not =0\}$, hence $\|\bbeta^*\|_0 = k^*$. Let $\bbeta^{(t)}$ be the OFSA coefficient vector at iteration $t$, $S_{\bbeta^{(t)}}$ be its support, $k= \mid S_{\bbeta^{(t)}}\mid\geq k^*$ and $s = k + k^*$. 
If $f$ is a differentiable function which is $m_s$-convex and $M_s$-smooth, then for any learning rate $0<\eta < 2m_s/M^2_s$, with probability  at least $1 - 2\exp(-p)$, we have
\begin{equation*}
\|\bbeta^{(t+1)} - \bbeta\| \leq \varphi\rho\|\bbeta^{(t)} - \bbeta\| + \varphi\eta\sqrt{s}\|\nabla f(\bbeta)\|_{\infty},
\end{equation*} 
where $\varphi = (\sqrt{5}+1)/2$ and 
\begin{equation*}
\rho = \sqrt{1 - 2\eta m_s + \eta^2M_s^2 } < 1.
\end{equation*}
\end{proposition}

\begin{theorem}{\bf (Convergence of OFSA)}\label{thm2:ofsa}
With the same assumptions as Proposition \ref{prop2}, let $\bbeta^{(0)} = 0$ and $S_{\bbeta^{(0)}} = \{1, 2, \cdots, p\}$.
Assume we have $M_s/m_s < 1.26$ for any $k^* \leq s \leq p$. 
Let $\bPi = \text{diag}\{\sigma_{\bx_1}, \sigma_{\bx_2}, \cdots,  \sigma_{\bx_p}\}^{-1}$ be the diagonal matrix with the inverse of true standard deviations of random variables $X_1,..., X_p$ respectively. 
Then, with probability $1 - 4p^{-1} - 2\exp(-p)$, the OFSA coefficient vector $\bbeta^{(t)}$ satisfies
\begin{equation*}
\|\bbeta^{(t)} - \bPi^{-1}\bbeta^*\|\leq (\varphi\rho)^t\| \bPi^{-1}\bbeta^*\| 
+ 2\varphi\eta\frac{\sigma + 2\|\bPi^{-1}\bbeta^*\|_\infty}{1 - \varphi\rho}\sqrt{\frac{p\log p}{n}}.
\end{equation*} 
\end{theorem}

Please note that the dimension of the vector $\bbeta^{(t)}$ will reduce from $p$ to $k^*$, thus we apply Proposition \ref{prop2} recursively with varying $k \geq k^*$. Here, we assume that $\|\bbeta^{(t)}\|_0 = k^*$. 
Now we show that the OFSA algorithm can recover the support of true features with high probability.
\begin{corollary}{\bf (True Support Recovery for OFSA)}\label{col1}
Under the conditions of Theorem \ref{thm2:ofsa}, let 
\begin{equation*}
\beta_{\min}:=\min_{j \in S_{\bbeta^*}}|\beta_j| > \frac{4\eta(\sigma+2\|\bPi^{-1}\bbeta^*\|_\infty)}{1 - \varphi\rho}\sqrt{\frac{p\log(p)}{n}}.
\end{equation*} 
Then after $t = [\frac{1}{\varphi\rho} \log(\frac{10\|\bPi^{-1}\bbeta^*\|}{\beta_{\min}})] + 1$ iterations, the OFSA algorithm will output $\bbeta^{(t)}$ satisfying $S_{\bbeta^*} = S_{\bbeta^{(t)}}$ with probability $1 - 4p^{-1} - 2\exp(-p)$.
\end{corollary}

Then we consider regret bounds for the OLS and OLSth algorithms. All the feature selection algorithms we mentioned will degenerate to OLS if the true features are selected. First, we define the regret for a sparse model with sparsity levels $\|\bbeta\|_0 \leq k^*$:
\begin{equation}\label{eq:regret}
R_n = \frac{1}{n}\sum_{i=1}^{n} f(\bbeta_i; \bz_i) - \min_{\bbeta \in \mathbb{R}^p , \|\bbeta\|_0 \leq k^*}\frac{1}{n}\sum_{i=1}^{n}f(\bbeta; \bz_i),
\end{equation}
where $\bbeta_{i+1}$ is the estimated coefficient vector based on the observation $\bz_i = (\bx_i, y_i)$ and the $\bbeta_{i}$. We define the initial $\bbeta_1 = 0$. 
Then we assume that the loss functions $f$ from \eqref{eq:regret} are twice continuously differentiable and for all $\bbeta \in \mathbb{R}^p$, there is existing a constant $G$, satisfying $\|\nabla f(\bbeta)\|\leq G$. 
\begin{proposition}{\bf (Regret of OLS)}\label{prop3}
Consider a linear regression model
$$
y_i = \bx^T_i\bbeta^* + \epsilon_i,
$$
where $\bx_i \in \mathbb{R}^p$, $i = 1,2,\cdots$, is a random vector independently drawn from the multi-variate Gaussian distribution $N(0, \bSigma)$, the random noise $\epsilon_i$ is sub-Gaussian with the parameter $\sigma$, and $\bbeta^*$ is the true coefficient vector. 
Denote $\bbeta_i$, $i = 1, 2, \cdots$, as the estimator for the online least square method by using the observations $\bx_1, \bx_2, \cdots$, and the initial $\bbeta_1 = 0$.
When $n > p$ and assuming that $\|\nabla f(\bbeta)\|\leq G$, where $G$ is a constant and $\bbeta \in \mathbb{R}^p$, with probability $1 - \exp\{-c_1p\}$, the regret for the online least square method can be upper bounded as
 \begin{equation*}
\text{R}_n=\frac{1}{n}\sum_{i=1}^{n} (y_i - \bx_i^T\bbeta_i)^2 - 
\min_{\bbeta}\frac{1}{n}\sum_{i=1}^{n}(y_i - \bx_i^T\bbeta)^2\leq\mathcal{O}(\frac{\log(n)}{n}).
\end{equation*}
\end{proposition}
\begin{theorem}{\bf (Regret of OLSth)}\label{thm3:regret}
Consider a sparse linear regression model
$$
y_i = \bx^T_i\bbeta^* + \epsilon_i,
$$
where $\bx_i \in \mathbb{R}^p$, $i = 1,2,\cdots$, is a random vector independently drawn from the multi-variate Gaussian distribution $N(0, \bSigma)$, the random noise $\epsilon_i$ is sub-Gaussian with the parameter $\sigma$, and $\bbeta^*$ is the true coefficient vector.
Let $S_{\bbeta^*}=\{j, \beta_j^*\not =0\}$, $|S_{\bbeta^*}|=k^*$ and
\begin{equation*}
    \min_{j \in S_{\bbeta^*}}|\beta_{j}^*|> 4\sigma\sqrt{\frac{\log(p)}{n_0\lambda}},
    \text{ where } 0 < \lambda < \lambda_{\min}(\bSigma). 
\end{equation*}
When $n_0 >> p$, with the probability at least $1 - \exp(-\log(p)) - \exp(-p)$,
the regret of OLSth satisfies:
\begin{equation*}
\begin{split}
    \text{R}_n = \frac{1}{n}\sum_{i=1}^{n} (y_i-\bx_i^T\bbeta_i)^2  -  
\min_{\|\bbeta\|_0 \leq k}\frac{1}{n}\sum_{i=1}^{n}(y_i-\bx_i^T\bbeta)^2
\leq \mathcal{O}(\frac{\log^2(n)}{n}),
\end{split}
\end{equation*}
where $n_0 = \mathcal{O}(\log^2(n))$.
\end{theorem}

\section{Experiments}\label{sec5:simandrealdata}
This section shows an evaluation of the proposed algorithms and a comparison with offline and online learning methods. 
First, numerical experiments on synthetic data are presented, comparing the performance of feature selection and prediction. 
Then, regret plots are also provided for the running averages-based methods and compared with classical online methods. 
Finally, an evaluation of real datasets is presented, for both regression and classification.
All simulation experiments are run on a desktop computer with a Core i5 - 4460S CPU and 16Gb memory.

\subsection{Experiments for Simulated Data}\label{subsec:simdata}
The simulated data is generated by uniformly correlated predictors: given a scalar $\alpha$, a $z_i \sim \mathcal{N}(0, 1)$ is generated for each observation and then the observation is obtained as
\begin{equation*}
\bx_i = \alpha z_i \mathbf{1}_{p \times 1 } + \bu_i, \text{ with } \bu_i \sim \mathcal{N}(0, \mathbf{I}_p).
\end{equation*}
The observations $\bx_1, \bx_2, ...$ are generated and let 
$\bX = (\bx^T_1, \bx^T_2, \cdots, \bx^T_n)^T$ be the data matrix. 
It is not hard to verify that the correlation between any pair of predictors is $\alpha^2/(1+\alpha^2)$. 
We set $\alpha = 1$ in our experiments, hence the correlation between any two variables is 0.5. 
Given $\bX$, the dependent response vectors $\mathbf{y}$ are generated from the following linear models, for regression and classification, respectively, 
\begin{eqnarray*}
&\mathbf{y} = \bX\bbeta^* + \bet, \text{ with } \bet \sim \mathcal{N}(0,\mathbf{I}_n), \\
&\mathbf{y} = \text{sign}(\bX\bbeta^* + \bet), \text{ with } \bet \sim \mathcal{N}(0,\mathbf{I}_n),
\end{eqnarray*}
where $\bbeta^*$ is a $p$-dimensional sparse parameter vector. 
The true coefficients $\beta^*_j = 0$ except $\beta^*_{10j^*} = \beta$, $j^* = 1, 2, \cdots, k$, where $\beta$ is a signal strength parameter. 
We can observe that the classification data cannot be perfectly separated by a linear model.

The simulation is based on the following data parameter setting: $p = 1000$ and $k = 100$, as well as $p\in \{100, 1000, 10000\}$ with $k=50$. 
Two signal strengths $\beta \in \{0.01,1\}$ (weak and strong signal) are considered. 
The sample size $n$ varies from $300$ to $10^6$ in the simulation. 
For regression, the proposed methods are compared with SADMM \citep{ouyang2013stochastic}, the offline Lasso \citep{tibshirani1996regression} and the truncated stochastic gradient descent (TSGD) \citep{langford2009sparse}:
\begin{equation*}
{\tilde \bbeta^{(n+1)}} 
= \text{Truncate} \big( \bbeta^{(n)} + \eta(y_n - \bx_n^T\bbeta^{(n)})\bx_n, \eta\lambda, \lambda \big),
\end{equation*}
where the Truncate operator is the $T_1$ operator from \cite{langford2009sparse}. 
\begin{figure}[ht]
\centering
\begin{tabular}{ccc}
\includegraphics[width=0.3\linewidth]{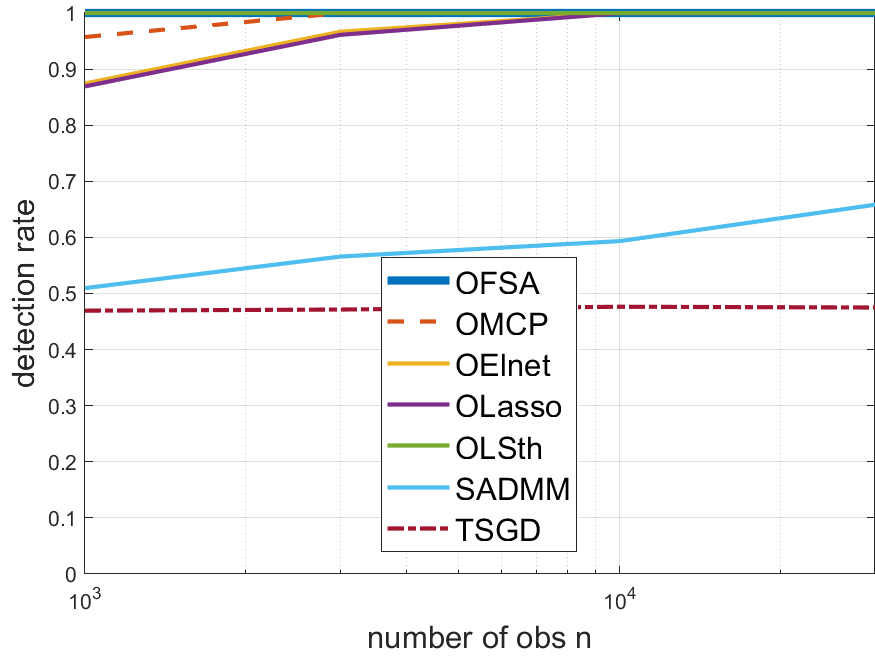}
&\includegraphics[width=0.3\linewidth]{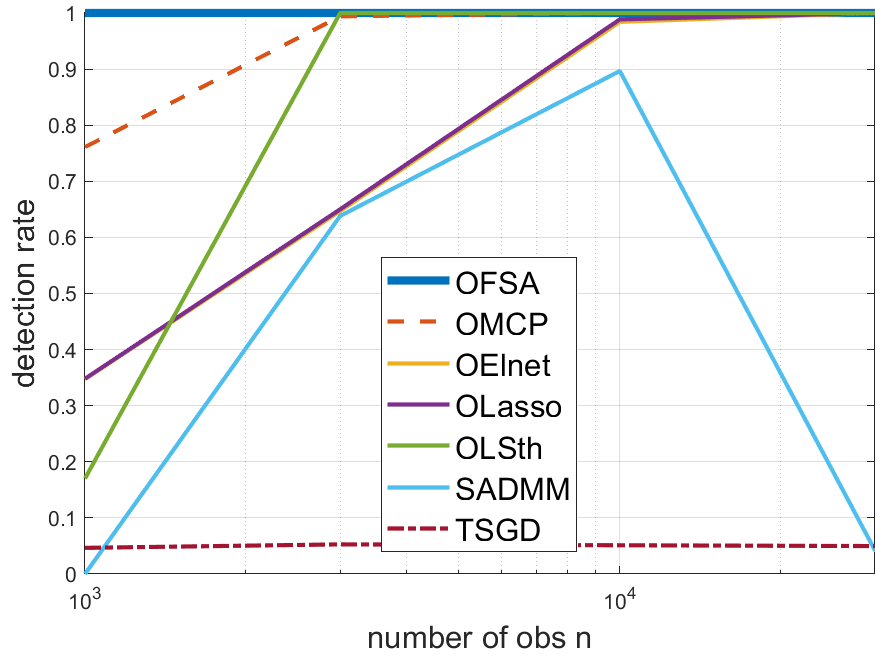}
&\includegraphics[width=0.3\linewidth]{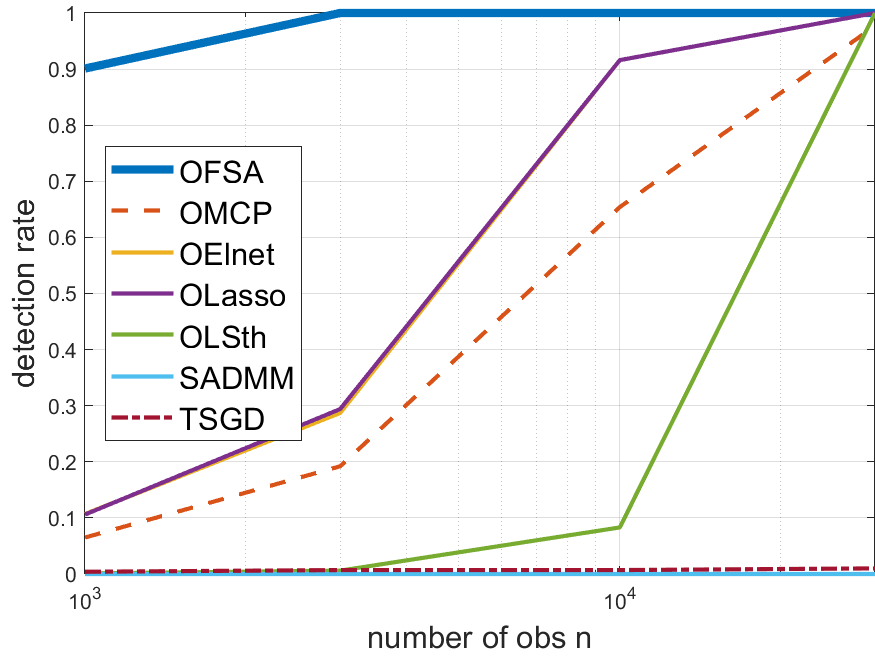}\\
$p=100, \beta=1$ &$p=1000, \beta=1$ &$p=10000, \beta=1$\\
\includegraphics[width=0.3\linewidth]{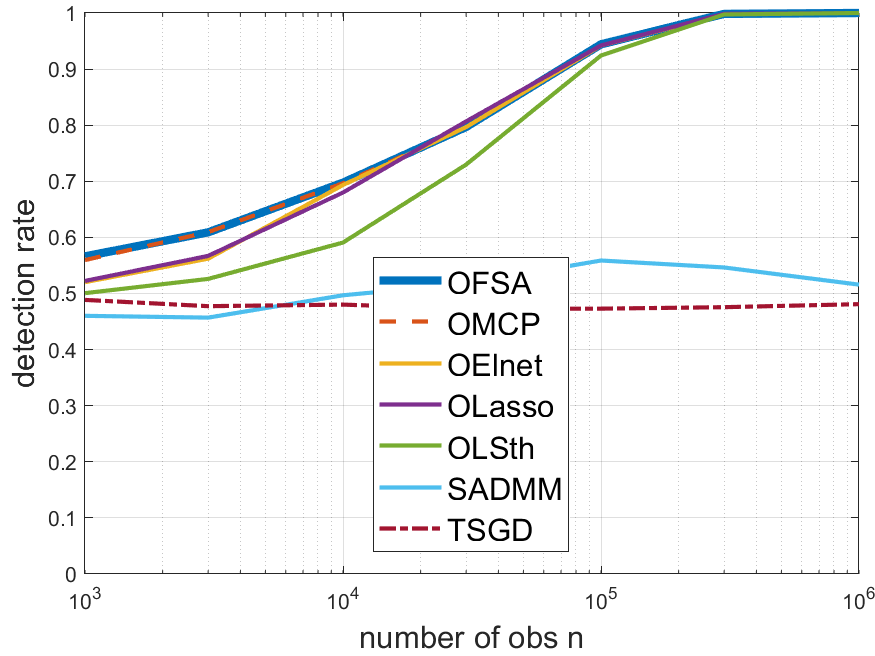}
&\includegraphics[width=0.3\linewidth]{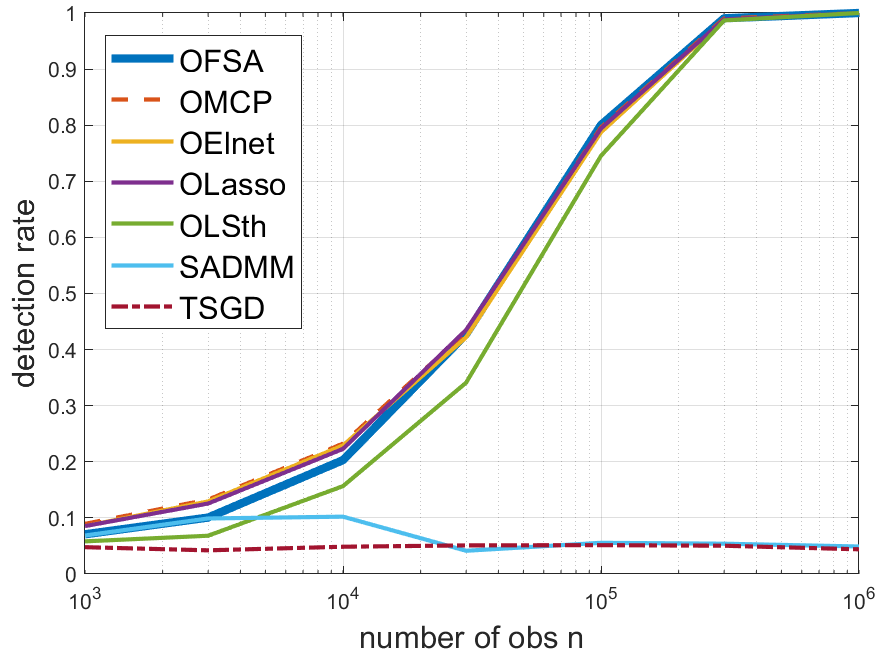}
&\includegraphics[width=0.3\linewidth]{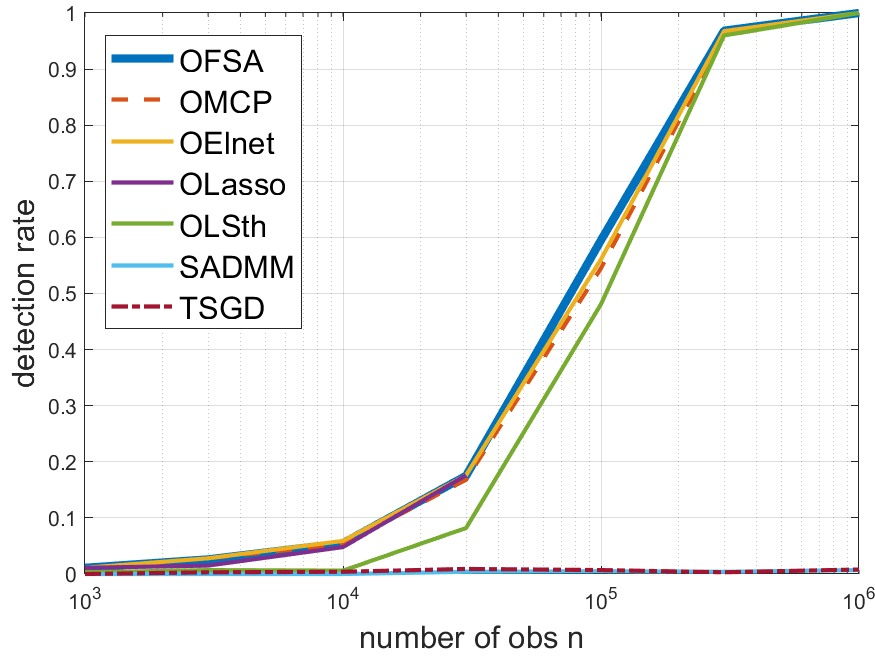}\\
$p=100, \beta=0.01$ &$p=1000, \beta=0.01$  &$p=10000, \beta=0.01$\\
\end{tabular}
\caption{Plot of the variable detection rate (DR) vs $n$ in regression, for $k=50$ true variables. Top: strong signal $\beta=1$, bottom: weak signal $\beta=0.01$.} \label{fig:plot_dr_n}
\end{figure}

For classification, four existing methods are used for comparison: the OPG \citep{duchi2009efficient}  and RDA \citep{xiao2010dual} methods for Elastic Net, the first order online feature selection (FOFS) method \citep{wu2017large} and the second order online feature selection (SOFS) method \citep{wu2017large}.

\setlength{\tabcolsep}{1.5pt}
\begin{table}[t]
\centering
\caption{\small{Comparison between running average-based methods and other online or offline methods for regression, averaged over 100 runs.}}\label{tab:DR_RMSE_small}
\scalebox{0.52}{
\begin{tabular}{c|ccccccc|ccccccc|cccccccc}
&\multicolumn{7}{c}{Variable Detection Rate (\%)}&\multicolumn{7}{|c}{RMSE} &\multicolumn{8}{|c}{Time (s)} \\
\hline
$n$  &Lasso&TSGD &SADMM &OLSth &OFSA &OMCP &OElnet    &Lasso&TSGD &SADMM &OLSth &OFSA &OMCP&OElnet    &Lasso&TSGD &SADMM &OLSth &OFSA &OMCP &OElnet&RAVE \\
\hline
\multicolumn{23}{c}{$p=1000, k=100$, strong signal $\beta=1$}  \\
\hline
$300$          &24.68  &10.88  &11.99    &64.56 &{\bf 71.09} &41.24  &24.68     &14.15    &98.56  &99.54   &8.641 &{\bf 7.605}&12.40   &14.16       
               &3.17   &4.29   &1.09     &0.050 &0.289       &15.1   &13.7      &0.012    \\
$500$          &27.72  &10.45  &13.14    &85.09 &{\bf 94.05} &53.50  &27.68     &12.89    &99.47  &97.85   &4.758 &{\bf 2.989}&9.786   &12.91       
               &3.42   &7.05   &1.78     &0.051 &0.288       &15.4   &10.6      &0.015    \\
$10^3$         &32.14  &10.15  &18.10    &94.53 &{\bf 99.81} &73.71  &32.12     &11.63    &100.11 &95.05   &2.657 &{\bf 1.136}  &6.282 &11.61 
               &4.33   &14.0   &5.33     &0.052 &0.289       &15.5   &9.65      &0.026 \\
$3\cdot10^3$  &46.05   &10.50    &41.23    &{\bf 100}  &{\bf 100}  &98.02     &45.19     &9.464  &100.23 &93.50   &{\bf 1.017}  &{\bf 1.017}  &1.745           &9.557 &26.9            &42.9    &15.7   &0.051    &0.288      &13.9       &7.11     &0.076\\   
$10^4$            &72.40   &10.20   &65.78     &{\bf 100}  &{\bf 100}  &{\bf 100}&72.42      &6.07      &99.89    &94.92   &{\bf 1.003}  &{\bf 1.003 } &{\bf 1.003}    &6.042 
&47.3            &140 &51.8   &0.051     &0.288      &6.51       &5.89     &0.246\\
\hline
\multicolumn{23}{c}{$p=1000, k=100$, weak signal $\beta=0.01$} \\
\hline
$500$     &11.77  &7.63   &11.47    &10.66   &11.18     &12.82          &11.70         &1.219    &1.417    &1.385   &1.270  &1.248  &1.168    &1.217               &3.97   &7.17   &1.809    &0.051   &0.289     &13.1           &12.87         &0.016\\
$10^3$    &14.09  &6.60   &13.53    &10.53   &12.40     &{\bf 15.55} &14.08    &1.128    &1.416    &1.363   &1.075  &1.169  &{\bf 1.049}&1.124                     &5.35   &14.2   &6.70    &0.052    &0.288      &13.2        &9.74     &0.026\\
$10^4$         &31.58  &7.28   &19.80    &22.48 &{\bf 32.47}&32.32       &31.54     &1.009    &1.413    &1.370   &1.025   &1.006  &{\bf 1.005}  &1.006
&48.1            &141       &67.8  &0.051 &0.287       &15.0        &4.96        &0.249\\
$10^5$            &81.93        &7.70   &11.30 &80.55       &{\bf 85.14}&84.86        &81.80     &{\bf 1.001}&1.414        &1.382 &1.003        &1.003       &1.003          &1.003
&452            &1415         &680  &0.051 &0.287       &15.9       &5.12        &2.46\\
$3\cdot 10^5$&98.66         &7.17   &10.80 &98.94       &{\bf 99.27}&99.26       &98.71     &0.999        &1.412         &1.383 &{\bf 0.998} &{\bf 0.998}&{\bf 0.998} &{\bf 0.998}
&1172             &4205         &2044   &0.051 &0.287       &14.0        &3.75       &7.33\\
$10^6$           &-                &6.75    &11.14 &{\bf 100}   &{\bf 100}   &{\bf 100}   &{\bf 100} &-              &1.413         &1.388 &{\bf 0.996} &{\bf 0.996}&{\bf 0.996} &{\bf 0.996}
&-                   &13654       &6427   &0.051 &0.288        &7.35       &1.73       &24.4\\
\hline
\end{tabular}}
\end{table}
\begin{table}[t]
\centering
\caption{\small{Comparison between running average-based methods and the other online methods for classification, averaged 100 runs.}}\label{table:DR&AUC}
\scalebox{0.51}{
\begin{tabular}{c|cccccccc|cccccccc|ccccccccc}
&\multicolumn{8}{c}{Variable Detection Rate (\%)}&\multicolumn{8}{|c}{AUC}&\multicolumn{9}{|c}{Time (s)}\\
\hline
n &FOFS&SOFS&OPG&RDA&OFSA&OLSth&OLasso&OMCP  &FOFS&SOFS&OPG&RDA&OFSA&OLSth &OLasso&OMCP&FOFS&SOFS&OPG&RDA&OFSA&OLSth&OLasso&OMCP&RAVE\\
\hline
\multicolumn{25}{c}{$p=1000, k=100$, strong signal $\beta=1$}\\
\hline
$10^3$         &10.64&10.06 &9.72 &12.66&14.44&9.90 &16.58&{\bf 18.57}&{\bf 0.994}&0.992 &0.992&0.991&0.973&0.975&0.986&0.992
               &0.001&0.001 &0.042&0.051&0.005&0.001&0.081&0.159      &0.026\\
$10^4$         &10.64&10.19 &10.46&11.91&38.89&30.30&34.70&{\bf 41.54}&0.995&0.992 &0.992&0.991&0.995&0.990&{\bf 0.996}&{\bf 0.996}
               &0.001&0.001 &0.490&0.848&0.005&0.001&0.080&0.160      &0.247\\
$3\times10^4$  &10.64&9.95  &10.42&10.34&{\bf 67.67}&59.32&56.18&67.52&0.994&0.992 &0.992&0.989&{\bf 0.998}&0.996&0.997&{\bf 0.998}
               &0.003&0.004 &1.47 &2.21 &0.005&0.001&0.083&0.158&0.742\\
$10^5$         &10.64&9.95  &10.43&11.08&{\bf 94.95}&93.21&86.90&94.77  &0.994&0.992 &0.992&0.990&{\bf 1.000}&{\bf 1.000}&0.999&{\bf 1.000}
               &0.010&0.015&4.90&6.12&0.005&0.001&0.079&0.159&2.48\\        
\hline
\multicolumn{25}{c}{$p=1000, k=100$, weak signal $\beta=0.01$}\\
\hline
$10^3$         &10.97&10.06 &9.75 &11.19&10.57&9.98 &{\bf 12.94}   &12.47   &{\bf 0.832}&0.831  &0.831&0.830&0.783&0.797&0.823&0.773
               &0.001&0.001 &0.042&0.048&0.005&0.001&0.080&0.156   &0.028\\
$10^4$         &13.40&10.19 &10.00&10.37&19.41&15.93&22.55&{\bf 23.81} &0.827&0.829  &0.828&0.828&0.824&0.815&0.829&{\bf 0.830}                      &0.001&0.001 &0.494 &0.815 &0.005&0.001&0.073&0.148&0.249\\    
$3\times10^4$  &15.86&9.95  &10.23 &10.34&34.46&27.35&35.14&{\bf 37.70} &0.827&0.829  &0.829&0.829&0.831&0.827&{\bf 0.832}&{\bf 0.832}                 &0.003&0.004 &1.48 &2.09 &0.005&0.001&0.074&0.152 &0.743\\
$10^5$         &17.36 &9.95  &10.32 &10.91&64.84&56.42&61.07&{\bf 64.95} &0.830&0.831  &0.831 &0.830&{\bf 0.834}&0.833&{\bf 0.834}&{\bf 0.834}         &0.010&0.015 &4.94 &5.83 &0.005&0.001&0.078&0.161&2.47\\
$3\times10^5$&17.13 &9.23  &10.32 &10.37&91.55&88.91&88.69&{\bf 91.58} &0.826&0.828  &0.828 &0.827 &{\bf 0.833}&{\bf 0.833}&{\bf 0.833}&{\bf 0.833} 
&0.030&0.044 &14.8 &17.3 &0.005&0.001&0.073&0.164&7.45\\
$10^6$       &17.72 &9.91  &-         &-        &{\bf 99.97}&99.94&99.88&{\bf 99.97} &0.828&0.829  &-         &-         &{\bf 0.834}&{\bf 0.834}&{\bf 0.834}&{\bf 0.834}  &0.100&0.146 &-         &-         &0.005&0.001&0.039&0.110&24.9\\          
\hline    
\end{tabular}}
\end{table} 
For each method, the sparsity controlling parameter was tuned to obtain $k$ variables. This can be done directly for OFSA and OLSth, and indirectly through the penalty parameters for the other methods. 
In RDA, OPG, and SADMM, 200 values of $\lambda$ were used on an exponential grid. The final $\lambda$ was chosen to induce $\hat{k}$ non-zero features, where $\hat{k}$ is the largest number of non-zeros features smaller than or equal to $k$, the number of true features.

The following criteria are used for evaluation: the true variable detection rate (DR), the root mean square error (RMSE) on the test data for regression, the area under the ROC curve (AUC) on the test data for classification, and the running time (Time) of the algorithms. 
The variable detection rate DR is defined as the average number of true variables correctly detected by an algorithm divided by the number of true variables. So if $S_\bbeta$ is the set of detected variables and $S_{\bbeta^*}$ are the true variables, then
\begin{equation*}
DR=\frac{E(\lvert S_\bbeta\cap S_{\bbeta^*}\lvert)}{\lvert S_{\bbeta^*} \lvert}.
\end{equation*}

\begin{figure}[ht]
\centering
\begin{tabular}{ccc}
\includegraphics[width=0.3\linewidth]{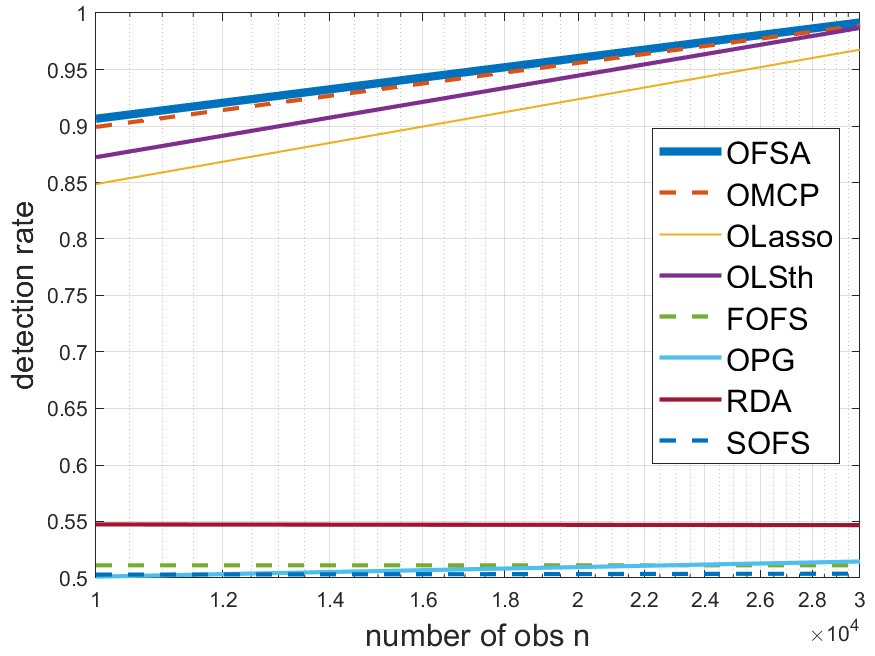}
\hspace{-2mm}&\includegraphics[width=0.3\linewidth]{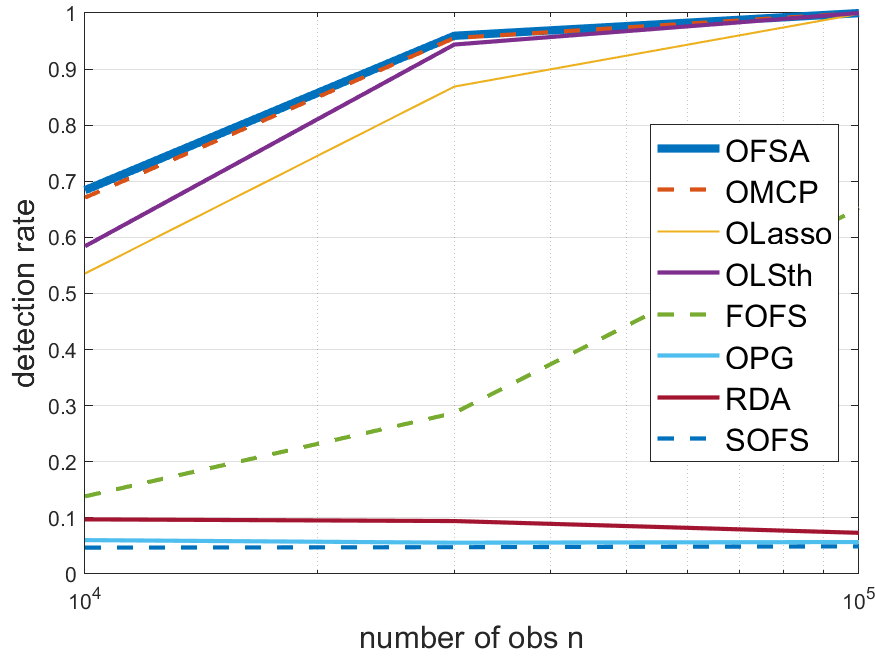}
\hspace{-2mm}&\includegraphics[width=0.3\linewidth]{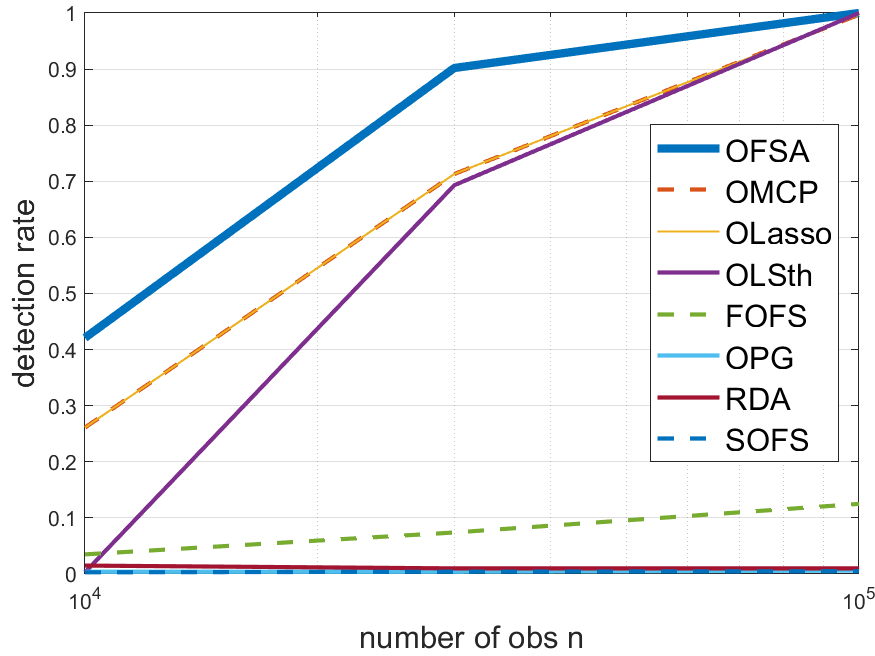}\\
$p=100, \beta=1$ &$p=1000, \beta=1$ &$p=10000, \beta=1$\\
\includegraphics[width=0.3\linewidth]{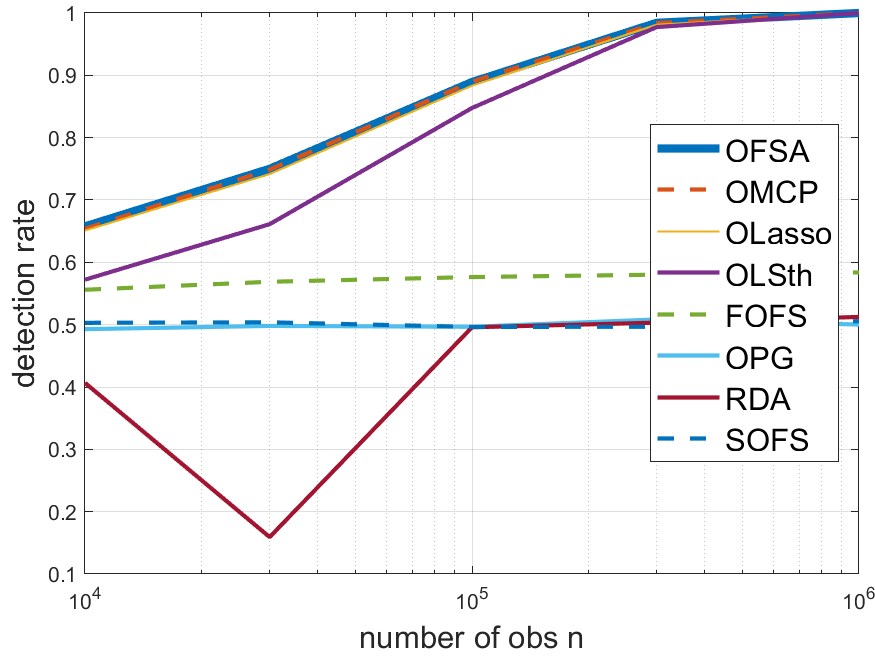}
\hspace{-2mm}&\includegraphics[width=0.3\linewidth]{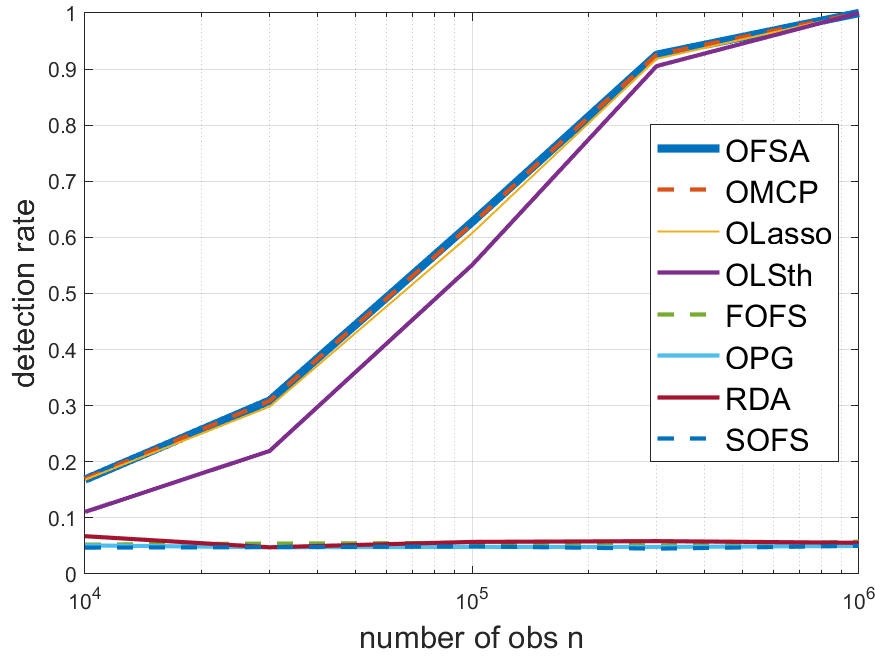}
\hspace{-2mm}&\includegraphics[width=0.3\linewidth]{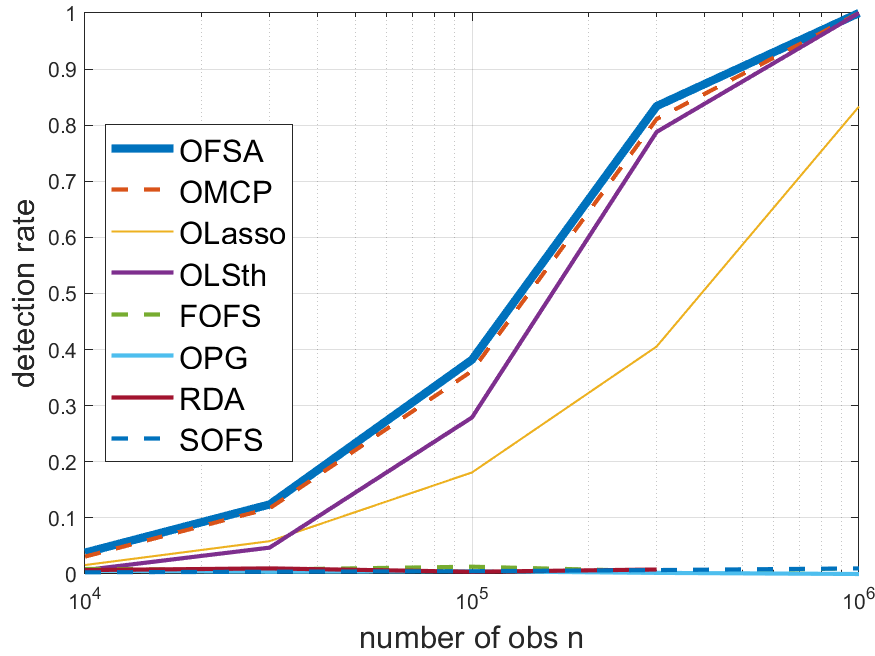}\\
$p=100, \beta=0.01$ &$p=1000, \beta=0.01$ &$p=10000, \beta=0.01$\\
\end{tabular}
\caption{Plot of the variable detection rate (DR) vs $n$ in classification, for $k=50$ true variables. Top: strong signal $\beta=1$, bottom: weak signal $\beta=0.01$.} \label{fig:plotclf_dr_n}
\end{figure}

The results, shown as the average of 100 independent runs, are presented in Figures \ref{fig:plot_dr_n} and \ref{fig:plotclf_dr_n} for $k=50$ and Tables \ref{tab:DR_RMSE_small} and \ref{table:DR&AUC} for $p=1000, k=100$.
The results from Figures \ref{fig:plot_dr_n} and \ref{fig:plotclf_dr_n} are also shown in detail in Tables S1 and S2 of the supplementary material.
Compared to the batch learning method Lasso, in regression, the running averages online methods enjoy low memory complexity. 
The larger datasets ($n=10^6$) cannot fit in memory and we cannot obtain the experimental results for Lasso in this case. 
For the proposed methods, the input is the running averages rather than the entire data matrix. 
The memory complexity for running averages is $\mathcal{O}(p^2)$, which is better than $\mathcal{O}(np)$ for batch learning in the setting of $n > p$. 

Table \ref{tab:minn_p} shows the minimum sample size $n$ observed to obtain a detection rate $DR>99\%$ for $k=50$ true variables. We see that all proposed running averages-based methods perform quite well, with the OFSA doing a little better for strong signals. 
The existing online methods we tested (RDA, OPG, SADMM, FOFS, and SOFS) never reached a $DR>99\%$ in our experiments.

From the numerical experiments, we can conclude that none of the existing online methods we tested (RDA, OPG, SADMM, FOFS, and SOFS) performs very well in true feature recovery. Only the offline Lasso and the proposed running averages-based methods can recover the true signal. When the signal is weak ($\beta=0.01$), although the running averages methods need a large sample size $n$ to recover the weak true signal, they outperform the batch learning methods and the other online methods in our experiments. 

\begin{table}[ht]
\centering
\caption{Minimum $n$ to obtain a detection rate $DR>99\%$ for $k=50$ true variables.}
\label{tab:minn_p}
\scalebox{0.8}{
\begin{tabular}{c|ccccccc}
\hline
 &\multicolumn{7}{c}{Regression} \\
$p$  &TSGD &SADMM &OLSth &OFSA &OMCP &OLasso/Elnet\\
\hline
\multicolumn{7}{l}{strong signal $\beta=1$}  \\
\hline
100 &- &- &$10^3$ &$10^3$ &$10^4$ &$10^4$\\
1000 &- &- &$3\cdot 10^3$ &$10^3$ &$10^4$ &$10^5$\\
10000 &- &- &$3\cdot 10^4$  &$3\cdot 10^3$ &$10^5$ &$3\cdot 10^4$\\
\hline
\multicolumn{7}{l}{weak signal $\beta=0.01$} \\
\hline
100  &- &- &$3\cdot 10^5$ &$3\cdot 10^5$ &$3\cdot 10^5$ &$3\cdot 10^5$\\
1000  &- &- &$10^6$ &$10^6$ &$10^6$ &$10^6$\\
10000  &- &- &$10^6$ &$10^6$ &$10^6$ &$10^6$\\
\hline
 &\multicolumn{7}{c}{Classification} \\
$p$  &FOFS &OPG&RDA&OLSth&OFSA&OMCP&OLasso\\
\hline
\multicolumn{7}{l}{strong signal $\beta=1$}  \\
\hline
100 &- &- &-  &$10^5$ &$3\cdot 10^4$ &$10^5$ &$10^5$\\
1000 &- &- &- &$10^5$ &$10^5$ &$10^5$&$10^5$\\
10000 &- &- &- &$10^5$ &$10^5$ &$10^5$&$10^5$\\
\hline
\multicolumn{7}{l}{weak signal $\beta=0.01$} \\
\hline
100  &- &- &- &$10^6$&$10^6$&$10^6$&$10^6$\\
1000  &- &- &-&$10^6$&$10^6$&$10^6$&$10^6$\\
10000 &- &- &-&$10^6$&$10^6$&$10^6$&-\\
\hline
\end{tabular}}
\end{table}

In prediction, most methods do well except in regression the existing methods (Lasso, TSGD, and SADMM) don't work well when the signal is strong. 
In contrast, the proposed running averages perform very well in prediction regardless of whether the signal is weak or strong, in both regression and classification.

Finally, we know that the computational complexity for obtaining the model from the running averages does not depend on the sample size $n$, but the time to update the running averages, shown as RAVE in Tables \ref{tab:DR_RMSE_small} and \ref{table:DR&AUC}, does increase linearly with $n$. 
Indeed, we observe in Tables \ref{tab:DR_RMSE_small} and \ref{table:DR&AUC} that the running time of OFSA and OLSth does not have significant changes. 
However, because of the need to tune the penalty parameters in OLasso, OElnet, and OMCP, it takes more time to run these algorithms. 
The computational complexity for traditional online algorithms will increase with sample size $n$. 
This is especially true for OPG, RDA, and SADMM, which take a large amount of time to tune the parameters to select $k$ features. 
When the sample size $n$ is very large, running these algorithms takes more than a few days to run 100 times.

\begin{figure}[ht]
\centering
\hspace{-2mm}
\includegraphics[width=0.33\linewidth]{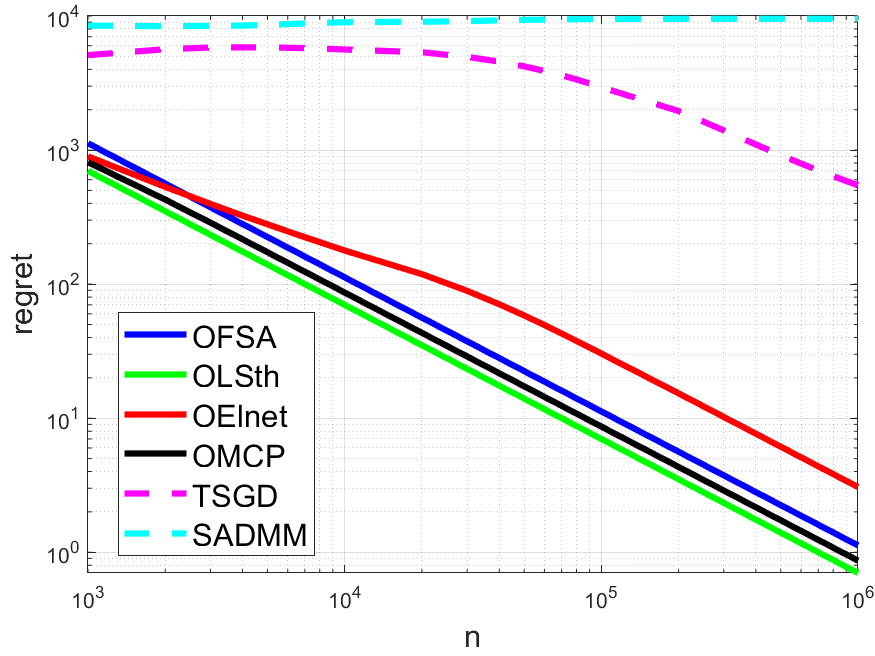}
\hspace{-2mm}
\includegraphics[width=0.33\linewidth]{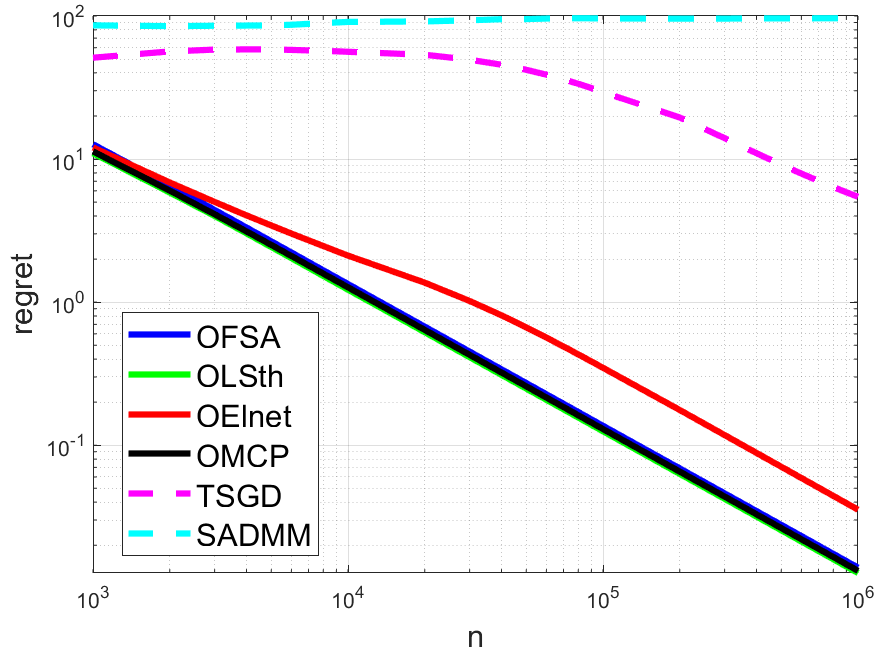}
\hspace{-2mm}
\includegraphics[width=0.33\linewidth]{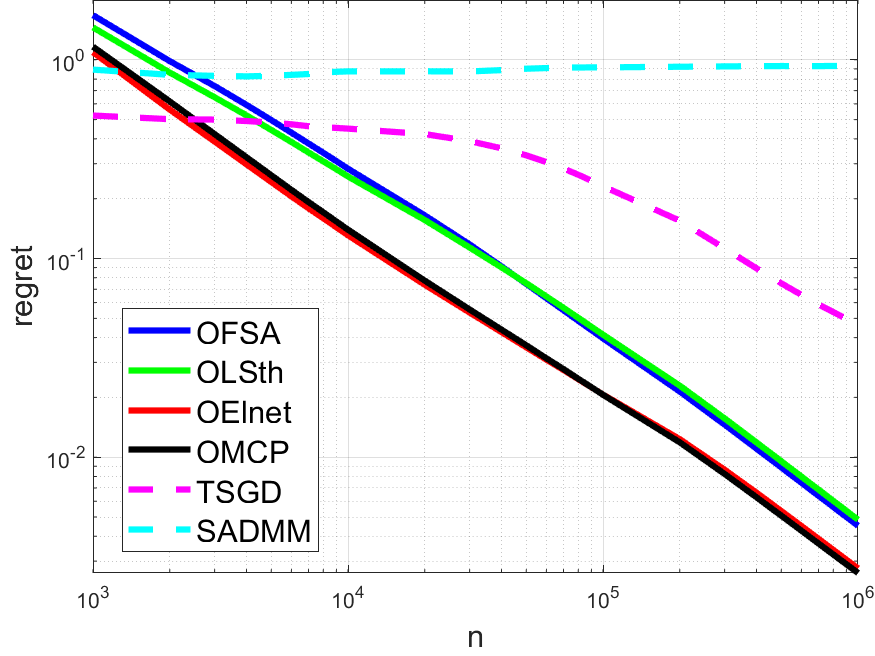}
\caption{Regret figures are presented for TSGD, SADMM, and running averages-based methods, averaged over 20 runs. Left: strong signal ($\beta = 1$), middle: medium signal ($\beta = 0.1$), right: weak signal ($\beta = 0.01$).}\label{fig:logreg}
\end{figure}

\subsubsection{Regret Analysis}
In this section, we present results about the regret of the different online methods in regression settings. 
In conventional online learning, theoretical analyses of the upper bound for regret were studied in \cite{hazan2007logarithmic} and \cite{zinkevich2003online}. 
Here, we focus on comparing the regret of the running averages-based online algorithms with the existing online methods. 

Figure \ref{fig:logreg} shows the curve of the regret for $\beta=1$(left), $\beta=0.1$(middle), $\beta = 0.01$(right). 
The sample size $n$ varies from 1000 to $10^6$. 
The slopes can be compared to see differences in convergence rates.
The regret of the SADMM method does not converge when the number of selected features is restricted to be at most $k$. 
The convergence rate for the running averages-based methods is close to $O(n^{-1})$. 
TSGD also seems to have the same convergence rate but starts with a plateau where the regret does not converge. 
\subsection{\bf Model Adaptation Experiments}
This section presents two simulations for linear regression models where the coefficients drift in time. 
In the first one, the data was generated as discussed in Section \ref{subsec:simdata} with the parameters $p = 100$ and $k = 10$, but here we assume that each nonzero $\beta_j$ varies as the observations are presented:
\begin{equation}\label{coef:vary}
\beta_{ij} = a \cos\{2\pi\frac{(i - 100j)}{T}\} + b, \text{ } j = \overline{1, k},  \text{ } i = \overline{1,T},
\end{equation}
in which $T$ is an unknown period. The values $a = 5, b = 5$, and $T = 1000$ were used in these simulations.
For each time, 1000 observations were generated. 
The running averages were updated with the model adaptation equation \eqref{adapt:eq}, where the adaptation rate was $\alpha_n = 0.01$.
\begin{figure}[ht]
\center
     \includegraphics[width=0.4\linewidth]{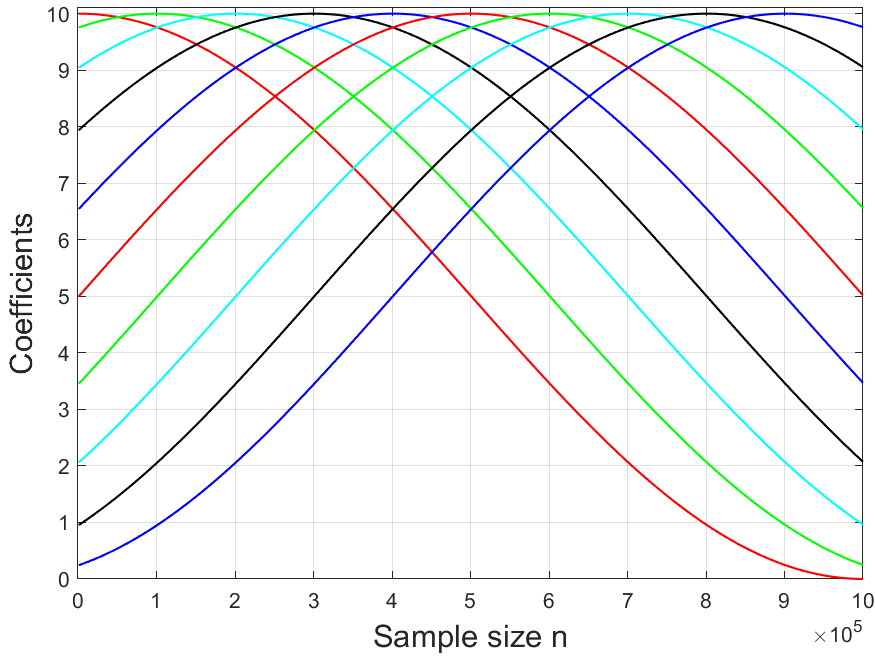}
     \includegraphics[width=0.4\linewidth]{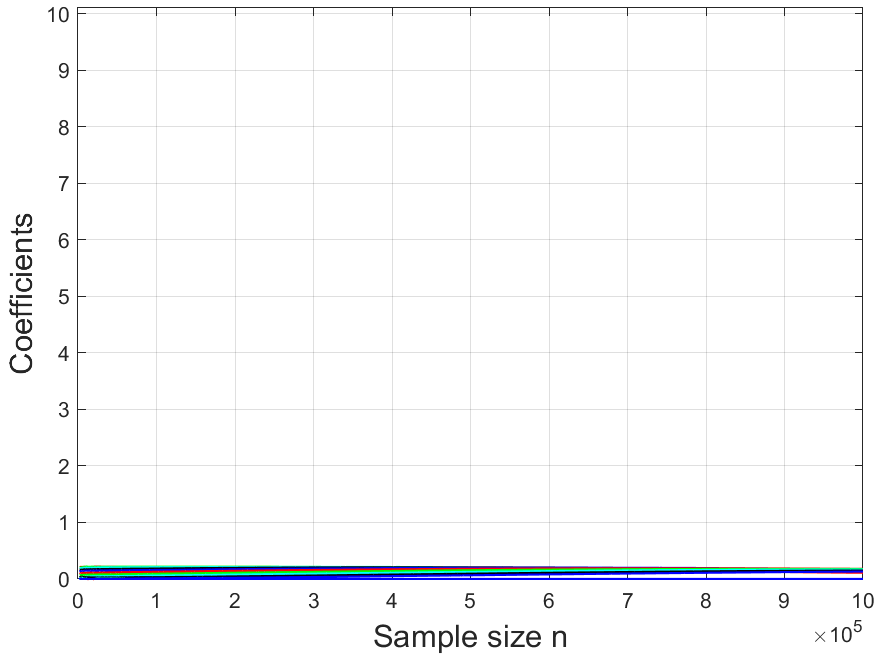}
     \includegraphics[width=0.4\linewidth]{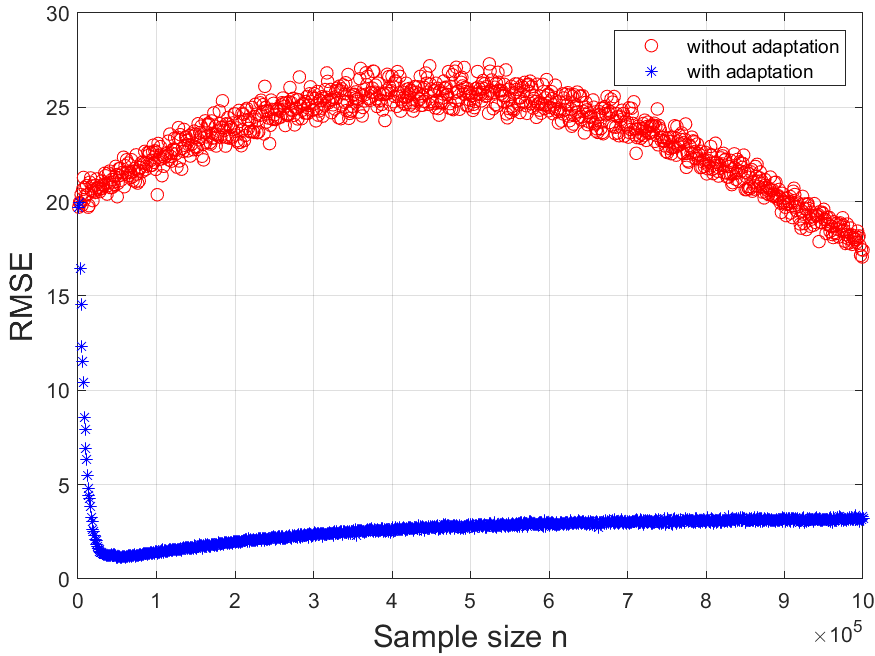}
     \includegraphics[width=0.4\linewidth]{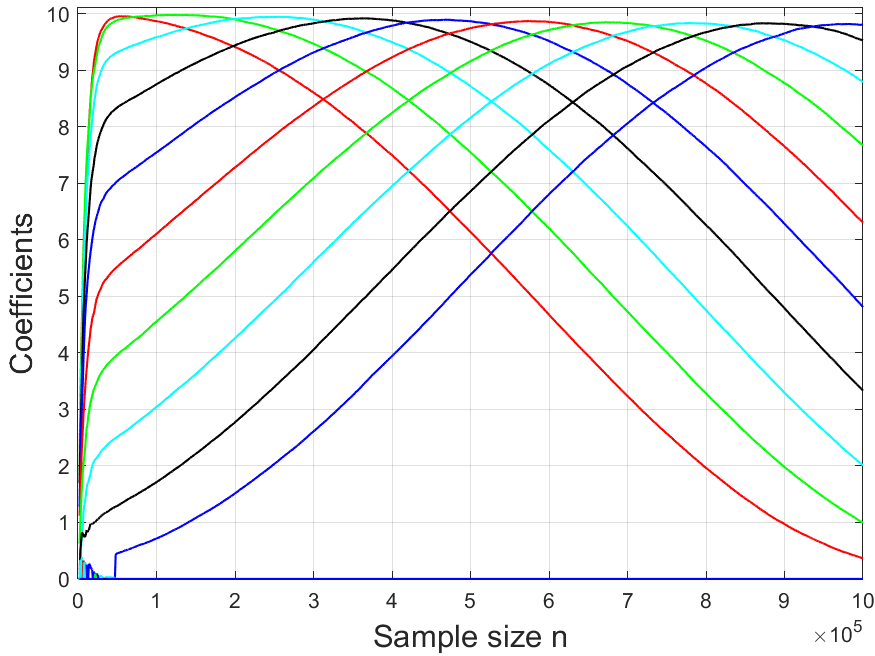}
\caption{Model adaptation experiment. Upper left: true signal. Upper right: estimated parameters without adaptation. Bottom left: RMSE for prediction. Bottom right: estimated parameters with adaptation.}\label{fig:modada}
\end{figure}

An example run is shown in Figure \ref{fig:modada}. 
One can see that our model adaptation method can follow the varying coefficients and perform better in prediction than without model adaptation. 
Table \ref{tab:modada} shows the RMSE for the last few hundred time steps, averaged over 20 independent runs. 
One can see that the RMSE with model adaptation is closer to the best RMSE possible $1.0$ and the prediction without model adaptation is quite poor.

\begin{table}[htb] 
\centering
\caption{\small RMSE for models with and without adaptation, averaged over 20 independent runs.}\label{tab:modada}
\scalebox{1.25}{
\begin{tabular}{ccc}
\hline
           &With adaptation &Without adaptation\\
\hline
Example 1& 3.163(0.005)&18.98(0.036)\\
Example 2& 3.021(0.008)&18.99(0.086)\\
\hline
\end{tabular}}
\end{table}

A second numerical experiment simulates a high dimensional dynamic pricing and demand problem \citep{qiang2016dynamic}. 
We assume that the demand $D_t$ follows a linear combination of price and the other covariates as
\vspace{-1mm}
\begin{equation*}
D_t = \beta_0 + \gamma p_t + \bx_t\bbeta_t + \epsilon_t, \text{ } t = 1, 2, \cdots,  
\end{equation*}  
in which $\gamma \in \mathbb{R}$ is the coefficient of the price $p_t$ at time $t$, and $\bbeta_t \in \mathbb{R}^{p-1}$ is the parameter vector for the other covariates, and $\gamma < 0$ in the model. 
The parameters $\gamma$, $\beta_0$, $\bbeta_t, t=1,2,...,$ are unknown to the seller and need to be estimated. 
Here we assume $\bbeta_t$ is sparse and varying with time. 
The above equation is commonly used in the economic community to model the relationship between the demand and the price. 
More details about the demand-price model can be found in \cite{qiang2016dynamic}.

The true price parameter was chosen as $\gamma = -0.5$, and $p_t \sim \mathcal{U}[10, 20]$. For the other covariates ($\beta_{tj}, j = 2, 3, \cdots, k$), we still used equation (\ref{coef:vary}), with $a = 5, b = 5, T = 2000$, and $\beta_{tj}=0$ for $j\in \{k+1,...,p\}$. For each time $t$, we generated 200 observations and used again the model adaptation rate $\alpha_n = 0.01$.

This simulation tries to find the relationship between demand and price in a varying marketplace. 
The marketplace is assumed to vary slowly over a very long time. 
This simulation setting is more complex than \cite{qiang2016dynamic} because we considered continuous varying coefficients as well as true feature recovery in our setting. 
However, we did not discuss the theoretical analysis here, which is left for a future study.  

\begin{figure}[ht]
\center
     \includegraphics[width=0.4\linewidth]{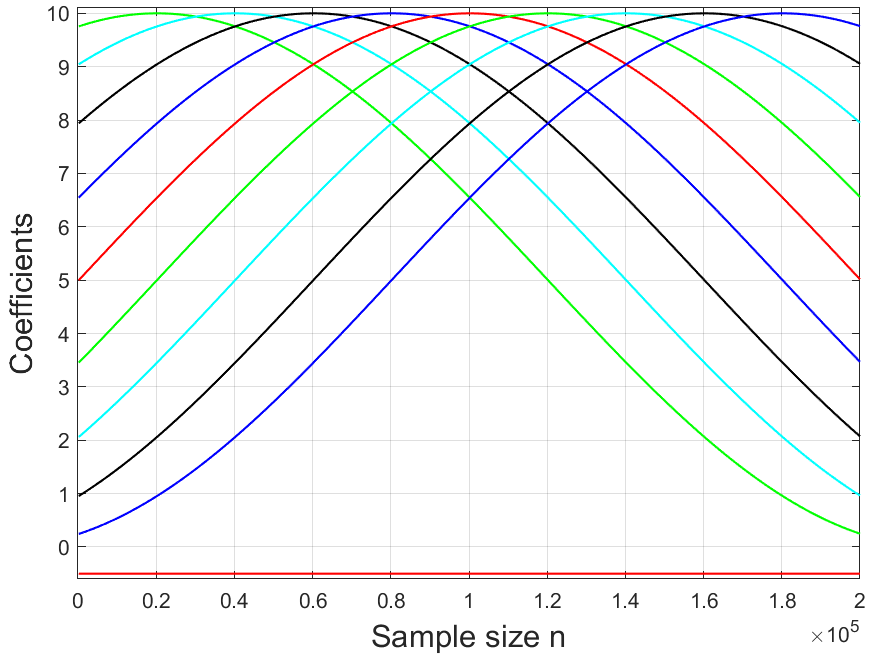}
     \includegraphics[width=0.4\linewidth]{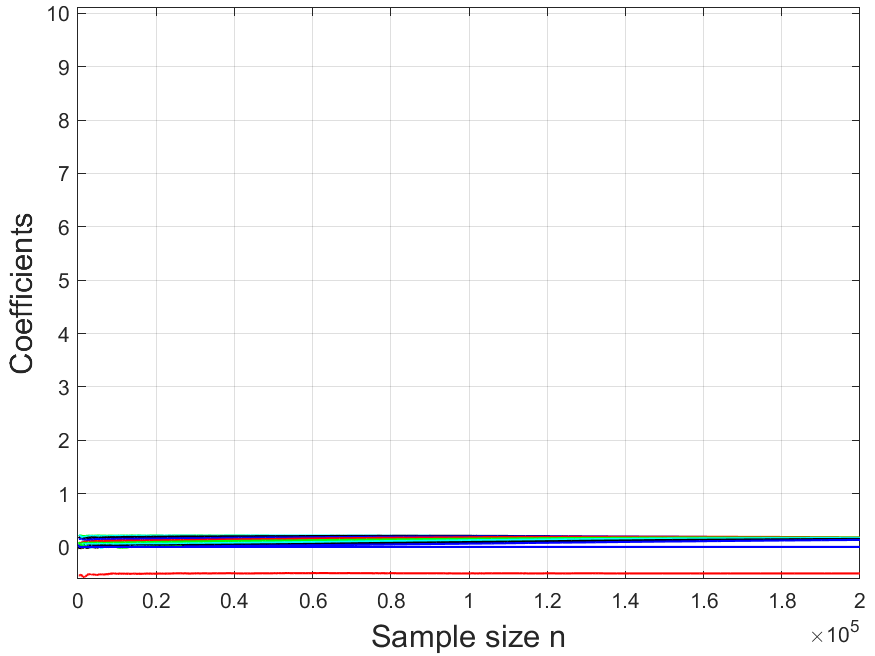}
     \includegraphics[width=0.4\linewidth]{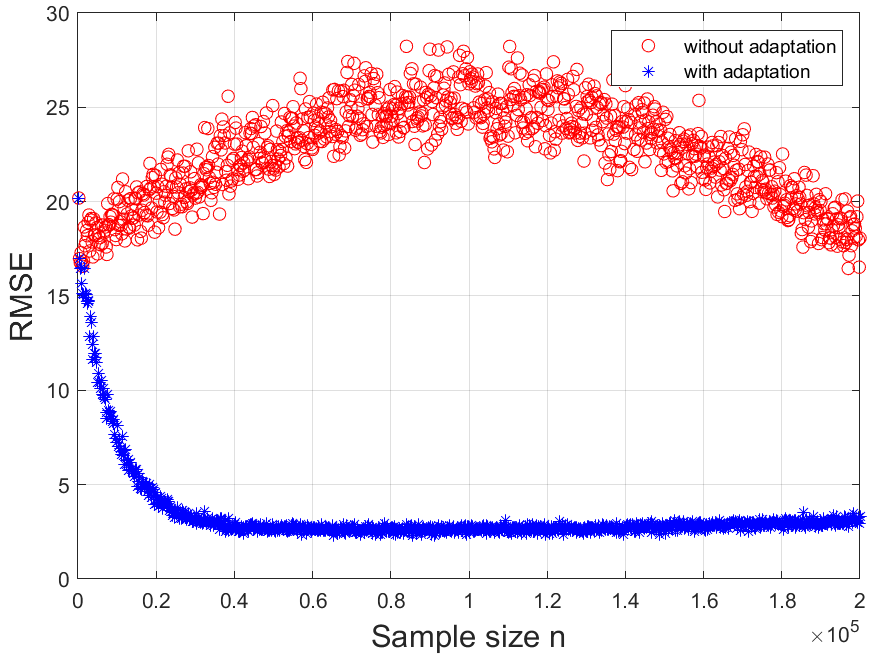}
     \includegraphics[width=0.4\linewidth]{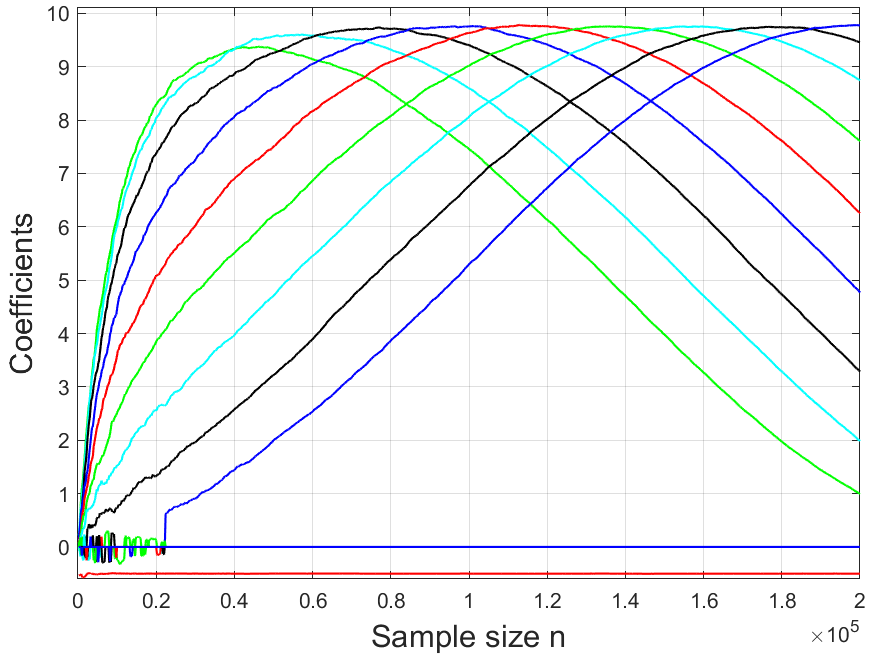}
   \caption{Model adaptation for dynamic pricing with feature selection. Upper left: true signal. Upper right: estimated parameters without adaptation. Bottom left: RMSE for prediction. Bottom right: estimated parameters with adaptation.}\label{fig:modadab}
\end{figure}

The results of the dynamic pricing simulation are shown in Figure \ref{fig:modadab}. 
One can see again that the model adaptation works quite well in following the drifting coefficients, and the RMSE is much smaller than without adaptation. 
One can also see the power of the running averages for model selection in practice. 
In the plot of the estimated coefficients with adaptation, the smallest non-zero coefficient (blue line) oscillates between being in the model and being zero until sufficient data is available, then it is permanently added to the model.

\subsection{\bf Real Data Analysis}
\label{sec:realdata}

For real data analysis, the running averages-based methods were applied to some real-world datasets such as the Year Prediction MSD, Gisette, and Dexter datasets from the UCI Machine Learning Repository \citep{uci}, and two image datasets. 

The Year Prediction MSD, with 90 features and 463,715 observations, is about the prediction of the release year of a song from audio features. 
For this dataset, the linear model was extended to a quadratic model using running averages: new features were generated as products of all pairs of the 90 features, obtaining a 4185-dimensional feature vector. 

The Wikiface \citep{rothe2015dex, rothe2018deep} dataset is about age estimation from a single image. 
Age estimation is a regression problem, as age has a continuous range of values. 
The dataset contains 53,040 face images of actors from Wikipedia and their ages. 
The faces are cropped and resized to $224\times 224$ pixels. From each face image, a 4096-dimensional feature vector is extracted using the pre-trained VGG-16 \citep{simonyan2014very} convolutional neural network (CNN). 

\begin{table*}[tb]
\centering
\caption{\small{Real data results shown as average $R^2$ for regression and average AUC for classification. 
}}\label{table:realdata}
\scalebox{0.9}{
\begin{tabular}{lcccccccc}
\hline
Regression Data  &$n$  &$p$  &OLSth  &OFSA   &Lasso  &TSGD  &SADMM &MLP\\
\hline
WIKIFace                              &53k    &4096   &{\bf 0.547}  &0.545  &0.503  &0.304  &0.503 &0.388\\
Year Prediction MSD                   &464k   &90     &{\bf 0.237}  &{\bf 0.237}  &{\bf 0.237} &0.017  &0.183 &-3.62\\
Year Pred. MSD online &464k  &4185   &{\bf 0.303}  &0.298  &-         &-          &-    &-67.3\\
\hline
\hline
Classification Data            &$n$   &$p$    &OLSth  &OFSA   &Lasso  &FOFS  &SOFS &MLP\\
\hline
Gisette  &7k  &5000   &0.990    &{\bf 0.997}   &0.993   &0.566   &0.502 &0.995\\
Dexter   &600   &20000 &0.936    &{\bf 0.971}   &0.940   &0.499   &0.499 &0.891\\
ImageNet ObjDet &91m &2048 &{\bf 0.916} &{\bf 0.916} &- &0.901 &0.902 &0.910 \\
\hline                    
\end{tabular}}
\end{table*}
The results of the above datasets are shown as the average of 20 random splits of 87.5\% training and 12.5\% test data for the WikiFace dataset, and 80\% training and 20\% test data for the Year Prediction MSD, Gisette, and Dexter datasets. 

The ImageNet ObjDet dataset is about detecting objects in the images of the ImageNet dataset \citep{deng2009ImageNet}. 
The ImageNet dataset contains 1.3 million training images, 50,000 validation images, and 50,000 test images from 1000 object classes. 
About 50\% of the training images and all the validation images have annotations with the object locations as bounding boxes. 
The \verb|resnet50_swsl| model from \cite{yalniz2019billion} was used to extract features, where the final classification layer and the average pooling layer were removed. 
This model obtains from an image of size $h\times w$ a grid of $\sim h/32\times w/32$  feature vectors of size $2048$ each. 
Using this data we considered the binary classification problem of detecting the centers of the object bounding boxes, independent of the object's class.
The 2048 feature vectors at the locations corresponding to the centers of the object bounding boxes are the positive examples and all other feature vectors at a distance of at least 3 pixels from the positive locations are negatives.
All annotated images of the ImageNet training set were used for training, and the validation set for testing.
This way a training set with 615k positives and 90.3 million negatives was obtained, and a test set with 80k positives and 1.02 million negatives.
This dataset has 730GB, so it could not be stored in the computer memory and had to be regenerated every time it was used.

A multi-layer perceptron (MLP) with one hidden layer with 128 hidden nodes and no feature selection was also implemented, to evaluate what a non-linear model can accomplish when it is trained online (only one pass through the data).

For each method, multiple models were trained using various values of the tuning parameters and sparsity levels $k$.
Then the parameter combination with the largest average test $R^2$, average test AUC over 20 random splits or largest test AUC (for the ImageNet ObjDet data) is reported in Table \ref{table:realdata}.

From Table \ref{table:realdata} one can see that OLSth and OFSA perform best in both regression and classification and the other online feature selection methods perform quite poorly, especially on the Gisette and Dexter classification datasets where a small set of features have to be selected. On the ImageNet ObjDet dataset, all 2048 features are relevant, which is probably why the FOFS and SOFS obtained much better AUCs than on Gisette and Dexter. MLP did a good job on the classification datasets but lagged in regression.
Moreover, the offline Lasso, TSGD, and SADMM cannot handle large datasets such as the ObjDet data or the Year Prediction MSD data with pairwise interactions, and on the small size datasets, the offline Lasso, TSGD, and SADMM have a smaller $R^2$ than the methods in the proposed novel online framework. 

In contrast, our running averages-based methods can not only be used to build the non-linear model on Year Prediction MSD but also obtain a better $R^2$ than the linear model without using pairwise interactions.

\section{\bf Discussion}\label{sec6:discuss}
This paper introduced a complete framework for online learning based on running averages (RAVEs). 
The RAVEs are defined to replace the data to compute the gradient. The procedures to standardize the data in the running averages are introduced together with a series of feature selection algorithms based on them. Additionally, the running averages-based feature selection algorithms are introduced for binary classification with imbalanced data, which is very useful for computer vision and beyond.

In contrast to the standard online methods, the proposed framework can be used for model selection, in the sense that different models with different sparsity levels can be built at the same time, without seeing the data again. 
This is especially useful when more complex models are desired to be extracted from the data as the number of observations increases.

The running averages-based methods minimize the same loss function as their offline counterparts. However, they enjoy much lower computation complexity in the low-dimensional case. They enjoy good convergence rates and can provably recover the support of the true signal with high probability. Such theoretical support recovery guarantees have been proved for the offline penalized methods such as regression with the $\ell_1$, SCAD, MCP, or the Elastic Net, and these guarantees naturally extend to the corresponding methods in our framework. Moreover, this paper proves support recovery guarantees for OLSth and OFSA as well as upper regret bounds for OLS and OLSth, which are not presented in the existing literature. 

Numerical experiments have demonstrated that the running averages-based methods outperform conventional online learning algorithms and batch learning methods in prediction and feature selection. Moreover, the regret of the running average methods diminishes faster than the conventional online algorithms. 

The running averages-based methods could have a wide variety of applications, for example for detecting environmental changes and for recommendation systems. 
One of their main advantages is that they could detect and recover a very weak signal given sufficiently many observations.  

However, we also need to pay attention to the weaknesses of the running averages-based methods, as they have limitations on computational and space complexities when addressing high-dimensional datasets, i.e., the case of $p >> n$, or $p \to \infty$ with $n \to \infty$.
The memory complexity for the running averages methods is $\mathcal{O}(p^2)$ and the computational complexity is $O(np^2)$. 
A very large $p$ will be an issue since the running averages would not fit in the computer memory in this case. 

\section*{Supplementary information}
The supplementary material contains the proofs of the theoretical guarantees from Section \ref{sec4:theory} and tables for some results from Section \ref{sec:realdata}.

\section*{Acknowledgments}
Barbu was partially supported by DARPA ARO grant W911NG-16-1-0579. 

\section*{Declarations}

\subsection*{Conflict of interest}
The authors declare that they have no conflict of
interest.

\subsection*{Funding}
This work was partially supported by DARPA ARO grant W911NG-16-1-0579. 

\subsection*{Data availability}
The real datasets used in this paper are publicly
available as specified in Section \ref{sec:realdata}.

\subsection*{Code availability}
The code will be made publicly available on \url{https://github.com/barbua/} upon paper acceptance.

\subsection*{Word Count}
The total word count is 8464.

\subsection*{Authors' contributions}
Sun and Barbu contributed to the study's conception and design. 
Sun proved the theoretical results.
Material preparation, data collection, and analysis were performed by Sun, Wang and Zhu. 
The first draft of the manuscript was written by Sun and was revised by Sun and Barbu. 
All authors have read and approved the final manuscript.

\bibliographystyle{asa.bst}
\bibliography{references}

\begin{thebibliography}{47}
\newcommand{\enquote}[1]{``#1''}
\expandafter\ifx\csname natexlab\endcsname\relax\def\natexlab#1{#1}\fi

\bibitem[{Barbu et~al.(2017)Barbu, She, Ding, and Gramajo}]{barbu2017feature}
Barbu, A., She, Y., Ding, L., and Gramajo, G. (2017), \enquote{Feature
  selection with annealing for computer vision and big data learning,}
  \textit{IEEE Transactions on Pattern Analysis and Machine Intelligence}, 39,
  272--286.

\bibitem[{Cai et~al.(2009)Cai, Sun, Li, and Goodison}]{Cai2009OnlineFS}
Cai, Y., Sun, Y., Li, J., and Goodison, S. (2009), \enquote{Online feature
  selection algorithm with Bayesian $\ell_1$ regularization,} in
  \textit{Pacific-Asia Conference on Knowledge Discovery and Data Mining}.

\bibitem[{Chu et~al.(2007)Chu, Kim, Lin, Yu, Bradski, Olukotun, and
  Ng}]{chu2007map}
Chu, C.-T., Kim, S.~K., Lin, Y.-A., Yu, Y., Bradski, G., Olukotun, K., and Ng,
  A.~Y. (2007), \enquote{Map-reduce for machine learning on multicore,} in
  \textit{NIPS}, pp. 281--288.

\bibitem[{Cotter et~al.(2011)Cotter, Shamir, Srebro, and
  Sridharan}]{cotter2011better}
Cotter, A., Shamir, O., Srebro, N., and Sridharan, K. (2011), \enquote{Better
  mini-batch algorithms via accelerated gradient methods,} in \textit{NIPS},
  pp. 1647--1655.

\bibitem[{Deng et~al.(2009)Deng, Dong, Socher, Li, Li, and
  Fei-Fei}]{deng2009ImageNet}
Deng, J., Dong, W., Socher, R., Li, L.-J., Li, K., and Fei-Fei, L. (2009),
  \enquote{Imagenet: A large-scale hierarchical image database,} in
  \textit{2009 IEEE Conference on Computer Vision and Pattern Recognition},
  IEEE, pp. 248--255.

\bibitem[{Duchi and Singer(2009)}]{duchi2009efficient}
Duchi, J. and Singer, Y. (2009), \enquote{Efficient online and batch learning
  using forward backward splitting,} \textit{Journal of Machine Learning
  Research}, 10, 2899--2934.

\bibitem[{Fan et~al.(2018)Fan, Gong, Li, and Sun}]{fan2018statistical}
Fan, J., Gong, W., Li, C.~J., and Sun, Q. (2018), \enquote{Statistical sparse
  online regression: A diffusion approximation perspective,} in
  \textit{AISTATS}, pp. 1017--1026.

\bibitem[{Fan and Li(2001)}]{fan2001variable}
Fan, J. and Li, R. (2001), \enquote{Variable selection via nonconcave penalized
  likelihood and its oracle properties,} \textit{Journal of the American
  Statistical Association}, 96, 1348--1360.

\bibitem[{Friedman et~al.(2001)Friedman, Hastie, and
  Tibshirani}]{friedman2001elements}
Friedman, J., Hastie, T., and Tibshirani, R. (2001), \textit{The Elements of
  Statistical Learning}, vol.~1, Springer.

\bibitem[{Gy{\"o}rfi et~al.(2002)Gy{\"o}rfi, Kohler, Krzyzak, Walk,
  et~al.}]{gyorfi2002distribution}
Gy{\"o}rfi, L., Kohler, M., Krzyzak, A., Walk, H., et~al. (2002), \textit{A
  Distribution-Free Theory of Nonparametric Regression}, vol.~1, Springer.

\bibitem[{Hazan et~al.(2007)Hazan, Agarwal, and Kale}]{hazan2007logarithmic}
Hazan, E., Agarwal, A., and Kale, S. (2007), \enquote{Logarithmic regret
  algorithms for online convex optimization,} \textit{Machine Learning}, 69,
  169--192.

\bibitem[{Javanmard(2017)}]{javanmard2017perishability}
Javanmard, A. (2017), \enquote{Perishability of data: dynamic pricing under
  varying-coefficient models,} \textit{The Journal of Machine Learning
  Research}, 18, 1714--1744.

\bibitem[{Kearns(1998)}]{kearns1998efficient}
Kearns, M. (1998), \enquote{Efficient noise-tolerant learning from statistical
  queries,} \textit{Journal of the ACM (JACM)}, 45, 983--1006.

\bibitem[{Kushner and Yin(2003)}]{kushner2003stochastic}
Kushner, H. and Yin, G.~G. (2003), \textit{Stochastic Approximation and
  Recursive Algorithms and Applications}, vol.~35, Springer Science \& Business
  Media.

\bibitem[{Langford et~al.(2009)Langford, Li, and Zhang}]{langford2009sparse}
Langford, J., Li, L., and Zhang, T. (2009), \enquote{Sparse online learning via
  truncated gradient,} \textit{Journal of Machine Learning Research}, 10,
  777--801.

\bibitem[{Liang et~al.(2022)Liang, Xue, and Jia}]{Liang2022MNR}
Liang, F., Xue, J., and Jia, B. (2022), \enquote{Markov neighborhood regression
  for high-dimensional inference,} \textit{Journal of the American Statistical
  Association}, 117, 1200--1214.

\bibitem[{Lichman(2013)}]{uci}
Lichman, M. (2013), \enquote{{UCI} Machine Learning Repository,} .

\bibitem[{Loh et~al.(2017)Loh, Wainwright, et~al.}]{loh2017support}
Loh, P.-L., Wainwright, M.~J., et~al. (2017), \enquote{Support recovery without
  incoherence: A case for nonconvex regularization,} \textit{The Annals of
  Statistics}, 45, 2455--2482.

\bibitem[{Luo and Song(2019)}]{Luo2019RenewableEA}
Luo, L. and Song, P. X.-K. (2019), \enquote{Renewable estimation and
  incremental inference in generalized linear models with streaming data sets,}
  \textit{Journal of the Royal Statistical Society: Series B (Statistical
  Methodology)}, 82.

\bibitem[{Luo and Song(2023)}]{luo2023multivariate}
--- (2023), \enquote{Multivariate online regression analysis with heterogeneous
  streaming data,} \textit{Canadian Journal of Statistics}, 51, 111--133.

\bibitem[{Murphy(2022)}]{pml1Book}
Murphy, K.~P. (2022), \textit{Probabilistic Machine Learning: An Introduction},
  MIT Press.

\bibitem[{Nesterov(2009)}]{nesterov2009primal}
Nesterov, Y. (2009), \enquote{Primal-dual subgradient methods for convex
  problems,} \textit{Mathematical programming}, 120, 221--259.

\bibitem[{Neykov et~al.(2016)Neykov, Liu, and Cai}]{neykov2016l1}
Neykov, M., Liu, J.~S., and Cai, T. (2016), \enquote{L1-regularized least
  squares for support recovery of high dimensional single index models with
  Gaussian designs,} \textit{Journal of Machine Learning Research}, 17, 1--37.

\bibitem[{Nguyen et~al.(2017)Nguyen, Needell, and Woolf}]{nguyen2017linear}
Nguyen, N., Needell, D., and Woolf, T. (2017), \enquote{Linear convergence of
  stochastic iterative greedy algorithms with sparse constraints,} \textit{IEEE
  Transactions on Information Theory}, 63, 6869--6895.

\bibitem[{Ouyang et~al.(2013)Ouyang, He, Tran, and Gray}]{ouyang2013stochastic}
Ouyang, H., He, N., Tran, L., and Gray, A. (2013), \enquote{Stochastic
  alternating direction method of multipliers,} in \textit{ICML}, pp. 80--88.

\bibitem[{Page(1955)}]{Page1955ATF}
Page, E.~S. (1955), \enquote{A test for a change in a parameter occurring at an
  unknown point,} \textit{Biometrika}, 42, 523--527.

\bibitem[{Qiang and Bayati(2016)}]{qiang2016dynamic}
Qiang, S. and Bayati, M. (2016), \enquote{Dynamic pricing with demand
  covariates,} \textit{Available at SSRN 2765257}.

\bibitem[{Rothe et~al.(2015)Rothe, Timofte, and Van~Gool}]{rothe2015dex}
Rothe, R., Timofte, R., and Van~Gool, L. (2015), \enquote{Dex: Deep expectation
  of apparent age from a single image,} in \textit{ICCV Workshops}, pp. 10--15.

\bibitem[{Rothe et~al.(2018)Rothe, Timofte, and Van~Gool}]{rothe2018deep}
--- (2018), \enquote{Deep expectation of real and apparent age from a single
  image without facial landmarks,} \textit{International Journal of Computer
  Vision}, 126, 144--157.

\bibitem[{Schifano et~al.(2016)Schifano, Wu, Wang, Yan, and
  Chen}]{Schifano2015OnlineBigData}
Schifano, E., Wu, J., Wang, C., Yan, J., and Chen, M.-H. (2016),
  \enquote{Online updating of statistical inference in the big data setting,}
  \textit{Technometrics}, 58, 393--403.

\bibitem[{Shalev-Shwartz and Ben-David(2014)}]{shalev2014understanding}
Shalev-Shwartz, S. and Ben-David, S. (2014), \textit{Understanding Machine
  Learning: From Theory to Algorithms}, Cambridge University Press.

\bibitem[{She et~al.(2009)}]{she2009thresholding}
She, Y. et~al. (2009), \enquote{Thresholding-based iterative selection
  procedures for model selection and shrinkage,} \textit{Electronic Journal of
  Statistics}, 3, 384--415.

\bibitem[{Simonyan and Zisserman(2014)}]{simonyan2014very}
Simonyan, K. and Zisserman, A. (2014), \enquote{Very deep convolutional
  networks for large-scale image recognition,} \textit{arXiv preprint
  arXiv:1409.1556}.

\bibitem[{Suzuki(2013)}]{suzuki2013dual}
Suzuki, T. (2013), \enquote{Dual averaging and proximal gradient descent for
  online alternating direction multiplier method,} in \textit{ICML}, pp.
  392--400.

\bibitem[{Tibshirani(1996)}]{tibshirani1996regression}
Tibshirani, R. (1996), \enquote{Regression shrinkage and selection via the
  lasso,} \textit{Journal of the Royal Statistical Society. Series B
  (Methodological)}, 267--288.

\bibitem[{Wainwright(2009)}]{wainwright2009sharp}
Wainwright, M.~J. (2009), \enquote{Sharp thresholds for high-dimensional and
  noisy sparsity recovery using $\ell_1$-Constrained Quadratic Programming
  (Lasso),} \textit{IEEE Transactions on Information Theory}, 55, 2183--2202.

\bibitem[{Wang and Li(2021)}]{Wang2021EstimationGC}
Wang, J. and Li, H. (2021), \enquote{Estimation of genetic correlation with
  summary association statistics,} \textit{Biometrika}, 109, 421--438.

\bibitem[{Wang et~al.(2014)Wang, Zhao, Hoi, and Jin}]{Wang2014OnlineFS}
Wang, J., Zhao, P., Hoi, S. C.~H., and Jin, R. (2014), \enquote{Online feature
  selection and its applications,} \textit{IEEE Transactions on Knowledge and
  Data Engineering}, 26, 698--710.

\bibitem[{Wu et~al.(2017)Wu, Hoi, Mei, and Yu}]{wu2017large}
Wu, Y., Hoi, S.~C., Mei, T., and Yu, N. (2017), \enquote{Large-scale online
  feature selection for ultra-high dimensional sparse data,} \textit{ACM
  Transactions on Knowledge Discovery from Data (TKDD)}, 11, 48.

\bibitem[{Xiao(2010)}]{xiao2010dual}
Xiao, L. (2010), \enquote{Dual averaging methods for regularized stochastic
  learning and online optimization,} \textit{Journal of Machine Learning
  Research}, 11, 2543--2596.

\bibitem[{Yalniz et~al.(2019)Yalniz, J{\'e}gou, Chen, Paluri, and
  Mahajan}]{yalniz2019billion}
Yalniz, I.~Z., J{\'e}gou, H., Chen, K., Paluri, M., and Mahajan, D. (2019),
  \enquote{Billion-scale semi-supervised learning for image classification,}
  \textit{arXiv preprint arXiv:1905.00546}.

\bibitem[{Yang et~al.(2016)Yang, Fujimaki, Kusumura, and Liu}]{yang2016online}
Yang, H., Fujimaki, R., Kusumura, Y., and Liu, J. (2016), \enquote{Online
  feature selection: A limited-memory substitution algorithm and its
  asynchronous parallel variation,} in \textit{SIGKDD}, ACM, pp. 1945--1954.

\bibitem[{Yu and Chen(2017)}]{Yu2017FiniteSC}
Yu, M. and Chen, X. (2017), \enquote{Finite sample change point inference and
  identification for high‐dimensional mean vectors,} \textit{Journal of the
  Royal Statistical Society: Series B (Statistical Methodology)}, 83,
  247–270.

\bibitem[{Yuan et~al.(2014)Yuan, Li, and Zhang}]{yuan2014gradient}
Yuan, X., Li, P., and Zhang, T. (2014), \enquote{Gradient hard thresholding
  pursuit for sparsity-constrained optimization,} in \textit{ICML}, pp.
  127--135.

\bibitem[{Zhang(2010)}]{zhang2010nearly}
Zhang, C.-H. (2010), \enquote{Nearly unbiased variable selection under minimax
  concave penalty,} \textit{Annals of Statistics}, 894--942.

\bibitem[{Zinkevich(2003)}]{zinkevich2003online}
Zinkevich, M. (2003), \enquote{Online convex programming and generalized
  infinitesimal gradient ascent,} in \textit{ICML}, pp. 928--936.

\bibitem[{Zou and Hastie(2005)}]{zou2005regularization}
Zou, H. and Hastie, T. (2005), \enquote{Regularization and variable selection
  via the elastic net,} \textit{Journal of the Royal Statistical Society:
  Series B (Statistical Methodology)}, 67, 301--320.

\end{thebibliography}
\end{document}